\newcommand{\squishlisttwo}{
 \begin{list}{$\bullet$}
  { \setlength{\itemsep}{1pt}
     \setlength{\parsep}{0pt}
    \setlength{\topsep}{0pt}
    \setlength{\partopsep}{0pt}
    \setlength{\leftmargin}{1em}
    \setlength{\labelwidth}{1.5em}
    \setlength{\labelsep}{0.5em} } 
}
\newcommand{\squishend}{
  \end{list}  }
\def\eqref#1{equation~\ref{#1}}
\def\1{\bm{1}}
\DeclareMathAlphabet{\mathsfit}{\encodingdefault}{\sfdefault}{m}{sl}
\SetMathAlphabet{\mathsfit}{bold}{\encodingdefault}{\sfdefault}{bx}{n}
\DeclareMathOperator*{\argmax}{arg\,max}
\theoremstyle{plain}
\newtheorem{theorem}{Theorem}[section]
\newtheorem{proposition}[theorem]{Proposition}
\theoremstyle{definition}
\theoremstyle{remark}
\newcommand{\state}[1][]{s_{#1}}
\newcommand{\action}[1][]{a_{#1}}
\newcommand{\embeds}[1][]{\tilde{\state}_{#1}}
\newcommand{\embeda}[1][]{\tilde{\action}_{#1}}
\newcommand{\nexts}[1]{{#1}'}
\newcommand{\RNN}{F_R}  
\newcommand{\MDNidx}{k} 
\newcommand{\algname}{SCOPE\xspace} 
\newcommand{\algvanilla}{vanilla MCTS\xspace}
\newcommand{\greed}{0-step Greedy\xspace}
\title{Broaden your SCOPE! Efficient Multi-turn Conversation Planning for LLMs using\\Semantic Space}
\author{Zhiliang Chen$^{1,2,}$\thanks{Equal contribution.} , Xinyuan Niu$^{1,3,*}$, Chuan-Sheng Foo$^{2,3}$ \&  Bryan Kian Hsiang Low$^{1}$ \\
$^{1}$Department of Computer Science,  National University of Singapore\\
$^{2}$Institute for Infocomm Research (I2R), A*STAR, Singapore\\
$^{3}$Centre for Frontier AI Research (CFAR), A*STAR, Singapore\\
\texttt{\{chenzhiliang, xinyuan\}@u.nus.edu}\\
\texttt{foo\_chuan\_sheng@i2r.a-star.edu.sg}\\
\texttt{lowkh@comp.nus.edu.sg}
}
\begin{document}

\maketitle
\vspace{-4mm}
\begin{abstract}

\emph{Large language models} (LLMs) are used in 
chatbots or AI assistants to hold conversations with a human user. 
In such applications, the quality (e.g., user engagement, safety) of a conversation is important and can only be exactly known
at the end of the conversation.
To maximize its expected quality, conversation planning reasons about the stochastic transitions within a conversation to select the optimal LLM response at each turn.
Existing simulation-based conversation planning algorithms typically select the optimal response by simulating future conversations with a large number of LLM queries at every turn.
However, this process is extremely time-consuming and hence impractical for real-time conversations.
This paper presents a novel approach called \underline{\textbf{S}}emantic space \underline{\textbf{CO}}nversation \underline{\textbf{P}}lanning with improved \underline{\textbf{E}}fficiency (\algname) that exploits the dense semantic representation of conversations to perform conversation planning efficiently.
In particular, \algname models the stochastic transitions in conversation semantics and their associated rewards to plan entirely within the semantic space. By doing so, SCOPE selects the optimal LLM response at every conversation turn without needing additional LLM queries for simulation. As a result, \algname can perform conversation planning \textit{70 times faster} than conventional simulation-based planning algorithms when applied to a wide variety of conversation starters and two  reward functions seen in the real world, yet achieving a higher reward within a practical planning budget. Our code can be found at: \url{https://github.com/chenzhiliang94/convo-plan-SCOPE}.
\end{abstract}


\section{Introduction}
\label{sec:intro}

The rise of \emph{large language models} (LLMs) has introduced numerous conversational tools in the market, such as Replika (\url{https://replika.com})
and chatbots \citep{dam2024completesurveyllmbasedai}. As these conversational tools become widespread, there are significant commercial and regulatory interests in ensuring that they produce high quality conversations with the human users. For example, a chatbot is commercially motivated to select responses at every conversation turn to produce a longer and more engaging conversation with a human user. In this case, the quality of a conversation can only be exactly known at the end of the conversation by accumulating the rewards (according to some metric such as conversation engagement, safety) over multiple turns. To produce an \textit{optimal conversation} where the cumulative reward across multiple turns is maximized, we need to perform \textit{conversation planning} that reasons about the stochastic transitions in a conversation to strategically select an optimal LLM response at each conversation turn.




Several recent works are \textit{myopic} in conversational planning because they merely select the LLM response that seems immediately ``good'' at each conversation turn. 
For example, one might be tempted to use a predefined reward function to select the LLM response with the highest immediate reward \citep{kumar2024filtering_greedy,yang2023improving} at each turn. However, myopic approaches do not consider how the selected LLM response influences the conversation (and rewards) later on. This implies a response that seems good at first might not lead to better conversation rewards. In \cref{app:myopic-flaws}, we give some illustrative examples where LLM responses with similar instantaneous rewards eventually lead to conversations of differing quality in metrics such as conversation harmfulness. 
Hence, without explicitly reasoning about how a selected LLM response influences the conversation later, myopic approaches are suboptimal in maximizing the quality of the overall conversation. 

Alternatively, an LLM can be fine-tuned \citep{ouyang2022rlhf, rafailov2023DPO} to produce a response that implicitly maximizes a cumulative reward metric across multiple conversation turns. While non-myopic if done correctly, these approaches are resource-intensive \citep{zhang2024RLHF_resource}, requiring manual preference labels and repeated model fine-tuning. Therefore, fine-tuning approaches are impractical for an LLM owner deploying LLMs for different use cases that require different reward metrics.
On the other hand, our paper focuses on the \textit{training-free} setting \textcolor{black}{(no fine-tuning of LLMs)} and explores inference time strategies \citep{lin2024useinstinctinstructionoptimization,detail2024,wu2024promptoptimizationeaseefficient,xu-etal-2024-position} to select LLM responses in a conversation.

More recently, some works have also explored using LLMs and simulation-based search algorithms  
\citep{yu2023promptbasedmcts, koh2024tree,hao2023reasoninglanguagemodelplanning, kim2024prospector} to plan in LLM-related problems such as goal-oriented dialogues \citep{yu2023promptbasedmcts}, language games \citep{jang2021montecarlo} and more. The popularity of these algorithms is driven mostly by the ubiquitous effectiveness of LLMs, using high quality look-ahead simulation with the help of another LLM to find out which responses lead to higher cumulative rewards in the simulated conversation.
However, these simulation-based algorithms suffer from high planning costs \citep{koh2024tree, tsu_planning_llm}, requiring \textit{hundreds or even close to thousands of seconds} to perform sufficient look-ahead simulation during runtime (details covered in \cref{subsec:bottleneck}). Such large planning budget is a luxury unavailable in real-time conversations, where a human user expects the LLM to respond almost immediately. Thus, given practical planning budget, these simulation-based search algorithms do not achieve sufficient simulation scope to select optimal LLM responses. Furthermore, these LLM queries could also incur excessive monetary costs if they use API calls from external LLM providers. Therefore, conventional simulation-based algorithms are 
impractical in our problem setting due to the large planning costs involved. In our experiments (\cref{subsec:main-result}), we show that such algorithms indeed perform poorly in conversation planning under practical planning budget.

\begin{figure}[ht]
    \centering
    \vspace{-1mm}
    \includegraphics[scale=0.45]{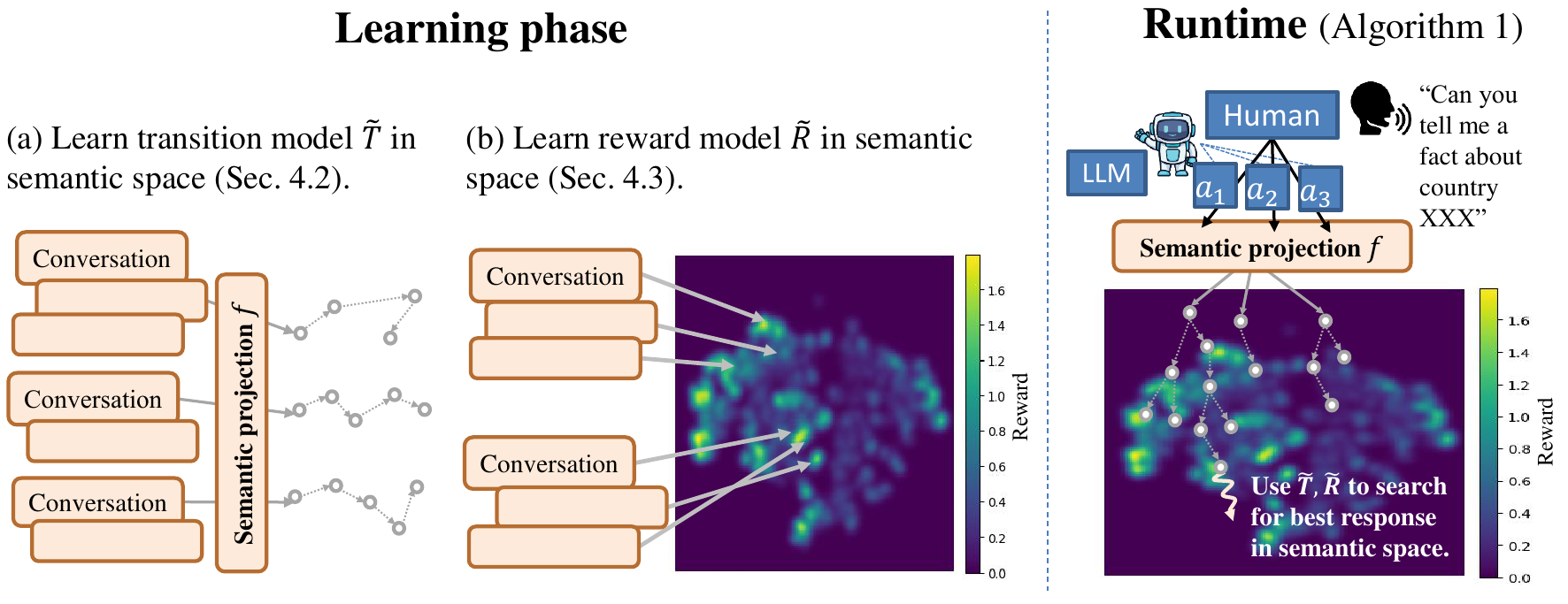}
    \vspace{-2mm}
    \caption{Overview of our approach, \algname, for conversation planning. During learning phase (left), we exploit dense semantic representation of conversations to learn a lightweight transition and reward model in semantic space (Notations and details covered in \cref{subsec:learning-transition,subsec:learning-reward}). During runtime (right), \algname uses these learnt models to perform MCTS at a broader scope in semantic space without needing additional LLM queries for simulation (\cref{alg:FMOSS}), allowing us to select an optimal LLM response that maximizes the cumulative reward in a conversation.}
    \label{fig:killer-diagram}
\end{figure}


This paper presents a novel approach called \underline{\textbf{S}}emantic Space \underline{\textbf{CO}}nversation \underline{\textbf{P}}lanning with improved \underline{\textbf{E}}fficiency (\algname) which exploits the dense semantic representation of conversations to perform conversation planning in a \textit{semantic space}  (\cref{fig:killer-diagram}). Such representations have been shown to capture the semantics of natural language conversations effectively \citep{devlin2019bertpretrainingdeepbidirectional, simclr}. 
During the learning phase, \algname learns the stochastic transitions in conversation and their associated rewards using a lightweight transition and reward model in semantic space. During runtime, \algname uses the learnt models to perform Monte Carlo tree search (MCTS) and simulation in semantic space, selecting an LLM response that leads to high quality conversations without using costly LLM queries.
This results in a non-myopic approach that is effective even under realistic planning budgets. As such, our work presents a paradigm shift, similar to that found in \citet{kambhampati2024llmscantplanhelp}: instead of relying on an LLM fully for simulation in conversation planning, our approach only uses an LLM to propose candidate responses, before using \algname to plan in semantic space without needing costly LLM queries. Concretely, our contributions are:
\squishlisttwo
    \item We transform the multi-turn conversation planning problem, originally viewed as a \emph{Markov decision process} (MDP), into its semantic form representation (\cref{subsec:semantic-space}). The semantic form preserves the optimal action (LLM response) selected at every conversation turn in a real-time conversation.
    \item We introduce a novel approach, \algname, that solves this transformed MDP by exploiting dense conversation semantic representations to learn a lightweight transition and reward model over conversation semantics. By modeling the stochastic transitions within a conversation and their associated rewards in semantic space, \algname performs MCTS entirely in semantic space without needing additional LLM queries to search for optimal LLM responses at every conversation turn. By doing so, \algname plans \emph{70 times faster} than conventional simulation-based planning algorithms, allowing it to select optimal LLM responses under practical planning budget.
    \item We use \algname to select LLM responses in a wide range of real-world multi-turn conversations and show that learning the transition and reward models in semantic space allows \algname to achieve higher cumulative rewards than conventional non-myopic and myopic planning algorithms under practical planning budget and two realistic reward functions.
\squishend



\section{Conversation planning for LLMs}
\subsection{Problem setting}\label{subsec:setting}

\begin{wrapfigure}{r}{6cm}
\vspace{-13mm}
\includegraphics[width=6cm]{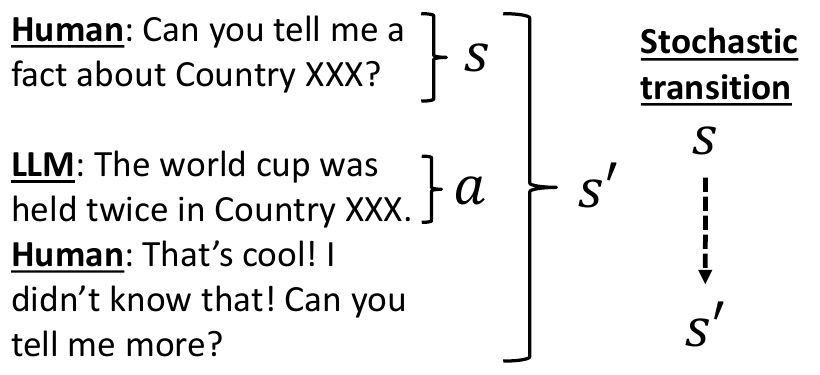}
\caption{Example of a stochastic transition from $s$ to $s'$ after taking action $a$ in MDP for conversation planning.}
\label{figmdp-example}
\vspace{-2mm}
\end{wrapfigure}

Our work focuses on real-time, multi-turn conversations between the LLM and a human user. Briefly, a conversation consists of alternating responses given by each party at every conversation turn.
Given a reward function which assigns a numerical reward to each conversation turn (dependent on the response given by the LLM and/or human user), \textit{conversation planning} aims to select the optimal LLM response out of a candidate pool of responses at every conversation turn, maximizing expected cumulative reward over multiple turns. With an appropriately chosen reward function (some examples given in \cref{subsec:reward-choice}), the cumulative reward indicates the overall quality of the conversation (e.g., engagement, safety). Conversation planning is challenging because (a) we need to account for the stochastic response of the human user (which we cannot directly control) at every conversation turn and (b) an LLM response selected at the current turn influences later conversations, affecting future rewards obtained.

Inspired by similar challenges found in the broad class of \textit{sequential decision making} problems \citep{bellman1957markovian} and dialogue planning \citep{yu2023promptbasedmcts} containing stochastic environments and sequential action selection, we formulate conversation planning as a Markov decision process (MDP) \citep{howard2012dynamic} defined by $(S,A,T,R)$ shown in \cref{figmdp-example}, where
\squishlisttwo
\item $S$ is a set of states. Each state $s \in S$ is the conversation context comprising all responses given chronologically from prior conversation turns and ending in a human response.
\item $A$ is a set of actions. Each action $a\in A$ is an LLM response shown to the human user at a conversation turn. Similar to prior works, a candidate set of LLM responses is generated at each turn in which $a$ is selected. After selecting an LLM response (i.e., taking action $a$), the human user responds, transitioning the conversation to a new state $s'$ (see \cref{figmdp-example}).
\item $T : S \times A \times S \mapsto [0,1]$ is a transition function specifying the probability of transitioning from a
conversation state $s \in S$
to the next state $s' \in S$ after executing action $a$ (i.e., selecting a particular LLM response to show the human user). $T$ captures how a human user stochastically responds at every conversation turn. In our paper, we are concerned about modeling a simulator to sample transitions from $T$ and do not necessarily need to calculate the transition probability.
\item $R(s,a, s')$ is an instantaneous reward function associated with the state transition at every conversation turn. This reward can be dependent on both LLM and human response (i.e., dependent on $s$, $a$ and $s'$), and is chosen by the LLM owner beforehand.
\squishend

Given this MDP, our goal is to, at each conversation state $s$, select an LLM response $a_s^*$ which maximizes the expected cumulative reward (w.r.t. the stochastic transition function) over a conversation horizon (e.g., 5 conversation turns):
\begin{equation}
\label{MDP-original}
\textstyle
    a_s^* \triangleq \arg\max_{a\in A} \sum_{s' \in S}T(s,a,s')(R(s,a, s') + \gamma V(s'))
\end{equation}
where $\gamma$ is the discount factor \citep{amit2020discount} and $V(s')$, the \textit{value} of state $s'$, is recursively defined as 
\begin{equation}
\label{MDP-value}
\textstyle
    V(s') \triangleq \max_{a'\in A} \sum_{s'' \in S}T(s',a',s'')(R(s',a', s'') + \gamma V(s''))\ .
\end{equation}

While one is tempted to solve our MDP using off-the-shelf approaches such as value iteration \citep{value_policy_iteration} to learn an optimal response selection policy, these approaches are infeasible in our problem setting due to the tremendous number of possible conversations between a human user and the LLM (i.e., huge number of states and actions).

\section{Monte Carlo Tree Search (MCTS) for Conversation planning}\label{sec:3-mcts-basic}
\subsection{Description of vanilla MCTS}
More recently, simulation-based planning algorithms like MCTS \citep{browne2012survey} have shown success in tackling MDPs surrounding language-based tasks \citep{yu2023promptbasedmcts,koh2024tree} by planning around more promising actions and states. MCTS uses a simulator of the environment to explore future states and actions, using observed cumulative rewards from simulation to learn \emph{state-action values} via a Q-function $Q(s,a)$ \citep{mnih2013playingatarideepreinforcement, watkins1992q}. The state-action value approximates the expected cumulative reward that can be achieved by action $a$ at state $s$.

Our approach, \algname, uses MCTS as its backbone. Here, we provide a brief overview of MCTS (details can be found in \cref{app:mcts-overview}). In MCTS, we assume access to a budget of $K$ iterations. At iteration $k$ during the selection phase, we traverse a search tree from the starting state according to the Upper-confidence Tree (UCT) \citep{UCT} tree-search policy:
$\pi(s) \triangleq \arg\max_{a\in A}\left(Q_{k}(s,a) + \lambda \sqrt{{(\log N_k(s))}/{N_k(s,a)}}\right)$
where $Q_{k}(s,a)$ is the state-action value estimate of selecting action $a$ at state $s$ during iteration $k$, $N_k(s)$ is the number of times $s$ is visited, $N_k(s,a)$ is the number of times action $a$ has been selected at $s$, and $\lambda$ is a constant which encourages exploration when selecting actions. With the ubiquitous success of LLMs, using an external LLM as a simulator of human user \citep{yu2023promptbasedmcts,yao2023treethoughtsdeliberateproblem} has become a popular way to perform simulation rollouts in MCTS (although we argue in the next section that doing so is impractical). Finally, the state-action value is updated with observed cumulative reward $\hat{R}$ (with discount factor $\gamma$) at the end of iteration $k$ via $Q_{k}(s,a)=Q_{k-1}(s,a)(1-1/{N_{k}(s,a)}) + \hat{R}/{N_k(s,a)} $. From hereon, we refer to the procedure of MCTS with an external LLM simulator as \textbf{vanilla MCTS}.

\subsection{Bottleneck for real-time conversation planning}\label{subsec:bottleneck}
While MCTS has shown promising results in games such as Go \citep{silver2016mastering} and Hearthstone \citep{hearthstone2018mcts}, it is impractical for real-time conversation planning as it requires a large planning budget to simulate future conversations. In particular, performing multiple simulation rollouts at every conversation turn with an external LLM acting as a human user simulator requires a large number of LLM inferences (or API calls to an LLM service provider, such as ChatGPT \citep{openai2023gpt4}). Using an LLM to simulate a conversation with responses of reasonable length takes a few seconds. So, the key bottleneck is that we cannot perform sufficient number of simulation rollouts (a key step in the MCTS algorithm) within a reasonable amount of time. As we can only simulate a very narrow scope of search space, we cannot accurately estimate state-action values to select the optimal LLM response at every turn. Interestingly, we note that many prior simulation-based planning approaches for LLMs \citep{koh2024tree, jang2021montecarlo, zhou2024languageagenttreesearch} use large amount of time to plan effectively during runtime. In fact, results from \citet{yu2023promptbasedmcts} showed that \textit{740 seconds} of search time (at \textit{each} conversation turn) is needed for \algvanilla to achieve reasonable planning performance in a goal-oriented dialogue task. This is clearly unreasonable for any real-time conversation constrained by a small window of response time. Indeed, our experiments (\cref{subsec:main-result}) show that \algvanilla is ineffective for conversation planning under realistic planning budgets. To overcome this bottleneck, we introduce \algname in the next section, a novel approach which exploits conversation semantics to avoid costly simulations with LLM queries.

\section{Introducing \algname}
\label{sec:introducing-our-approach}
The key intuition behind \algname is to perform MCTS without explicitly simulating future conversations with costly LLM queries.
To do so, we leverage dense semantic representations \citep{devlin2019bertpretrainingdeepbidirectional, gecko_embedding_2024, mistral_embedding_2024,laudipper} of natural language in a continuous space which we refer to as \textit{semantic space}. Notably, we found that a conversation can be captured effectively by a series of stochastic transitions in the semantic space (we provide empirical evidence in \cref{app:transition_model_performance}) across multiple turns. By predicting these transitions and learning the instantaneous rewards associated with each point in semantic space, we can perform MCTS with simulations in semantic space instead of using time-consuming LLM queries. This results in a non-myopic approach that selects the optimal LLM responses in real-time conversations. While prior state abstraction works \citep{state_abstraction_mcts_2014} have tried to cluster MDP states based on feature similarities, they still rely on costly simulations in the original environment. On the contrary, our approach performs simulations entirely in the semantic space, which are much less time-consuming.

\algname consists of two distinct phases (\cref{fig:killer-diagram}). During the learning phase, we use static conversation data to learn a lightweight transition and reward model over the semantic space. Both models are significantly faster to query as compared to LLMs, as only a single forward pass of the model is required to simulate each conversation transition. The transition model (\cref{subsec:learning-transition}) predicts how a conversation transitions stochastically from one point to another in the semantic space while the reward model (\cref{subsec:learning-reward}) predicts the reward associated with each point in semantic space. Therefore, every conversation can be represented as a path in this semantic space, and our goal is to search for 
the optimal next immediate action that leads to a path with the highest cumulative reward. We provide several examples of such paths in \cref{subfig:harmfulness-viz,subfig:harmfulness-main}.
During runtime, \algname uses a semantic embedding model to project the conversation starter and candidate LLM responses into the semantic space. Then, \algname uses the learnt models from the learning phase to broaden the scope of search in semantic space (\cref{alg:FMOSS}) without any need for additional LLM queries. Our experiments (\cref{subsec:main-result}) show that under different planning budgets, \algname consistently achieves higher cumulative rewards than other baselines.


\subsection{Semantic space representation}\label{subsec:semantic-space} We use a semantic embedding model $f:S\mapsto \mathbb{R}^n$ to map conversation states $s$ into its semantic representation $\embeds \triangleq f(s)$. In practice, we can use of-the-shelf text embedding models as $f$. With this, we rewrite the original MDP (\ref{MDP-original}) for conversation planning into
\begin{equation}
\label{MDP-semantic}
\textstyle
    \arg\max_{\embeda} \sum_{\embeds'}\widetilde{T}\left(\embeds,\embeda,\embeds'\right)\left(\widetilde{R}(\embeds, \embeda, \embeds') + \gamma \widetilde{V}(\embeds')\right)
\end{equation}
where $\widetilde{T}$ is the transition model over semantic space, $\widetilde{R}$ is the instantaneous reward associated with each conversation turn in semantic space, $\widetilde{V}$ is defined recursively similar to \cref{MDP-value}, and $\embeds'$ is the resulting state in semantic space after a transition. 
{\color{black}In this new MDP, state $\embeds$ represents a \textit{point} in semantic space, $\embeda$ represents an action (possible LLM response) in the semantic space. A conversation's semantic transition $\embeds \rightarrow \embeds'$ is represented as a \emph{directional vector} along a path shown in \cref{fig:killer-diagram}.}
We theoretically show in \cref{app:mdp-transformation} that under some assumptions of $f$, $\widetilde{R}$ and $\widetilde{T}$, new MDP \cref{MDP-semantic} yields the same solution as the original MDP \cref{MDP-original}. By learning $\widetilde{T}$ and $\widetilde{R}$ using static conversation data (details covered next), \algname performs MCTS in the $\mathbb{R}^n$ semantic space to solve MDP \cref{MDP-semantic} without needing costly LLM queries. In our experiments, we show that even if assumptions on $f$, $\widetilde{R}$ and $\widetilde{T}$ do not hold true, \algname consistently outperforms other baselines by selecting LLM responses that yield higher cumulative rewards.

\subsection{Learning transition model \texorpdfstring{$\widetilde{T}$}~~ in semantic space} \label{subsec:learning-transition}
Given a semantic embedding model  $f:S\mapsto\mathbb{R}^n$, our first step is to learn a transition model $\widetilde{T}(\embeds,\embeda,\embeds')$, allowing us to simulate transitions in semantic space (we do not necessarily need to derive the transition probability explicitly). To do so, we make use of a conversation dataset containing $N$ transitions: $\{(s_1,s'_1), (s_2,s'_2), \dots, (s_N,s'_N)\}$, where $s_i,s'_i$ represents an observed transition from one conversation state to another. For example, the sentence pair: (1) ``\textbf{Human}: Hello how are you?'' and (2) ``\textbf{Human}: Hello how how are you? \textbf{LLM}: I'm good, thank you! What did you do this weekend? \textbf{Human}: I went to the cinema!'' represents a transition at one conversation turn (``I'm good, thank you! What did you do this weekend?'' is an action $a$ taken by the LLM). Next, we use a semantic embedding model $f$ to transform this dataset into $\{(\embeds[1],\embeds[1]'), (\embeds[2],\embeds[2]'), \dots, (\embeds[N],\embeds[N]')\}$. Ideally, a learnt $\widetilde{T}$ should allow us to sample semantic transitions similar to those found in the dataset.


At first glance, one might treat this as a multi-output regression problem in a supervised learning setting (given a labeled dataset of conversation transitions) and learn a deterministic neural network $F_T(\embeds, \theta_T): \mathbb{R}^n \mapsto \mathbb{R}^n$ with network weights $\theta_T$ according to the mean squared error loss:
\begin{equation}
    \textstyle \min_{\theta_T} {N}^{-1} \sum^{N}_{i=1} \left(F_T(\embeds[i],\theta_T) - \embeds[i]'\right)^2
\end{equation}
that approximates transitions from $\widetilde{T}$. However, this neural network, being deterministic, cannot simulate the stochastic nature of transitions within conversations. Instead, we can use probabilistic models such as mixture density network (MDN) \citep{bishop1994mixture} or deep ensembles \citep{deep_emsemble_moe} to model $\widetilde{T}(\embeds,\embeda,\embeds')$ based on the labeled dataset. These probabilistic models are more suitable largely due to their abilities to draw samples from transition distributions (more details in \cref{app:transition}). We provide empirical evidence that these probabilistic models can indeed model stochastic semantic transitions well in \cref{app:transition_model_performance}. While it is not immediately clear which probabilistic model choice of $F_T(\embeds[i],\theta_T)$ leads to better performance in \algname, we perform ablation studies in \cref{sec:ablation_study} to tease out the influence of different transition model choices on performance of \algname.
{\color{black}Note that in our implementation, $\widetilde{T}$ consists of two separate models, one to predict $\embeds \rightarrow \embeda$ and another to predict $(\embeds, \embeda) \rightarrow \embeds'$ in semantic space (details in \cref{app:transition}).}



\subsection{Learning Reward model \texorpdfstring{$\widetilde{R}$}~~in semantic space} \label{subsec:learning-reward}
In the original problem setting, it is easy to derive the instantaneous reward $R(s,a,s')$ of \cref{MDP-original} at each conversation turn. For example, if we want to minimize the number of harmful words uttered over the entire conversation, we can simply count the number of harmful words appearing at each conversation turn and treat that as the instantaneous reward during simulation in MCTS. However, as \algname performs MCTS in the semantic space, we cannot directly derive the instantaneous reward during simulation because we do not know the reward as we transition from one point to another in semantic space. To resolve this, we use another neural network $F_R(\embeds, \theta_R)$, parameterized by $\theta_R$, to estimate the reward associated with every point $\embeds$ in the semantic space. Then, for any state $\embeds'$ encountered after performing $\embeda$ at $\embeds$ in semantic space, the instantaneous reward in semantic space $\widetilde{R}(\embeds, \embeda, \embeds')$ (\cref{MDP-semantic}) can be recovered via $\widetilde{R}(\embeds, \embeda, \embeds') \approx F_R(\embeds',\theta_R) - F_R(\embeds,\theta_R)$ (this is an approximation if the neural network cannot learn the rewards perfectly). We provide rigorous explanation on the validity of this recovery process and details of the training process for the reward model $F_R(\embeds, \theta_R)$ from data in a supervised setting in \cref{app:reward}. Note that in practice, we can avoid training a reward model and directly use off-the-shelf models by using the same model for reward and semantic embedding (details in \cref{sec:exp_setup,app:transition}).

\section{Using \algname during runtime}

As we show in our experiments (\cref{subsec:main-result}), the learning phase only needs to be conducted once to learn $\widetilde{T}$ and $\widetilde{R}$ before deploying them for different real-time conversations. During runtime, \algname uses the same learnt transition and reward models $\widetilde{T},\widetilde{R}$ to perform MCTS (similar to that in \cref{sec:3-mcts-basic}) in semantic space. \cref{alg:FMOSS} demonstrates \algname at every conversation turn.

\begin{algorithm}[ht]
   \caption{\textbf{Semantic space Conversation Planning with improved Efficiency (\algname)}}
   \label{alg:FMOSS}
\begin{algorithmic}[1]
\State \textbf{Input: } Initial conversation context $\state[\text{init}]$. Transition model $\widetilde{T}$ and reward model $\widetilde{R}$ from learning phase (Sections~\ref{subsec:learning-transition} and~\ref{subsec:learning-reward}). Pre-trained semantic embedding model $f$. $k\triangleq 0$. Initial Q-function $Q_k(\embeds,\embeda)$. Planning budget $K$. Branching factor $m$. Search depth $D$.
\State Propose $m$ candidate LLM responses: $\{\action[1],\action[2],\dots,\action[m]\}\ .$
\State \textbf{Projection to semantic space: } 
$\embeds[\text{init}]\triangleq f(\state[\text{init}])$,
$\embeda[j] = f(\state[\text{init}]+\action[j])-f(\state[\text{init}]),
\forall j=1,2,\dots,m$
\While {$k<K$}
   \State \textbf{Select} From root $\embeds[\text{init}]$, traverse search tree according to search policy 
   $\pi_k(\embeds)=\arg\max_{\embeda}(Q_k(\embeds,\embeda)+UCT(\embeds,\embeda))$ until leaf state $\embeds[\text{leaf}]$ with unexplored action is reached.

   \State At $\embeds[\text{leaf}]$, pick a random unexplored action $\embeda[\text{leaf}]$.
   
   \State \textbf{Expand}
    \textcolor{black}{Conditioned on the selected action $\embeda[\text{leaf}]$,} randomly sample a new state $\embeds[\text{new}]$ from learnt transition model $\widetilde{T}$ to simulate $\tilde{T}(\embeds[\text{leaf}],\embeda[\text{leaf}]) \rightarrow \embeds[\text{new}]$. At new state $\embeds[\text{new}]$, Use $\tilde{T}$ to sample $m$ state actions $\embeda[\text{new, 1}], \dots, \embeda[\text{new, m}]$ and assign them to $\embeds[\text{new}]$. (details in \cref{app:transition})

    \State
    \textbf{Simulation} Run simulation rollouts from $\embeds[\text{new}]$ by sampling from $\widetilde{T}$ until termination at search depth $D$, taking note of observed cumulative rewards $\hat{R}$ using $\widetilde{R}$.

    \State \textbf{Update} Backpropagate observed cumulative rewards $\hat{R}$ to each encountered state in search tree and update state-action values via $Q_{k+1}(\embeds,\embeda)=Q_k(\embeds,\embeda)(1-1/{N_{k}(\embeds)}) + \hat{R}/{N_k(\embeds,\embeda)}$.

    \State $k=k+1$
\EndWhile
\State
Return LLM response with largest state-action value: $\arg\max_{\embeda\in\{\embeda[1],\embeda[2],\dots,\embeda[n]\}} Q_K(\embeds[\text{init}], \embeda)$

\end{algorithmic}

\end{algorithm}

Given a conversation context $\state[\text{init}]$ ending with a prompt from a human user, the LLM, like prior works \citep{kambhampati2024llmscantplanhelp, yang2024largelanguagemodelsoptimizers}, proposes a list of candidate LLM responses (line 2). Then, we project $\state[\text{init}]$ and the candidate responses to semantic space with the help of a pre-trained semantic embedding model $f$. 
\textcolor{black}{The projected candidate actions $\embeda[j]$ (line 3) corresponds to the semantic representations of the LLM response, as proposed by the specific LLM being used. As different LLMs would typically propose different starting candidate responses even with the same prior dialogue states, this captures the behavior of the specific LLM being used, which will result in different simulation outcomes for different LLMs.}
From line 4 to 9, \algname runs MCTS in the semantic space with the trained $\widetilde{T}$ (from \cref{subsec:learning-transition}) acting as a simulator until search depth $D$. The cumulative rewards consists of the sum of discounted instantaneous reward given by reward model $\widetilde{R}$ (\cref{subsec:learning-reward}) and is used to update a Q-function $Q(\embeds,\embeda)$ after each simulation rollout. In \cref{sec:ablation_study}, we conduct some ablation studies to investigate the effect of different transition model choices and search depth $D$.

Once planning budget $K$ is exhausted, we use the learnt Q-function $Q_K(\embeds,\embeda)$ to select the best LLM response (Line 11) to be shown to the human user. After the human user replies, we repeat \algname in the next conversation turn. 
In practice, we use time as the planning budget (instead of a fixed number of algorithm iteration $K$) and run \algname for the given amount of time for real-time conversations. 
A detailed visual guide of \cref{alg:FMOSS} can be found in \cref{app:FMOSS-viz}.


\section{Experiments}
We evaluate \algname on a large variety of conversation starters from dialogue datasets consisting of open conversations and dialogues between LLM and humans \citep{zheng2024lmsyschat1m,li2017dailydialogmanuallylabelledmultiturn} and compare its performance with a variety of conversation planning baselines (details in \cref{app:exp_details}). First, we show that for two practical reward functions, \algname attains larger cumulative rewards as compared to other planning algorithms under practical planning budget during runtime. Then, we perform multiple ablation studies to tease out the influence of different components within \algname.


\subsection{Choice of reward functions}
\label{subsec:reward-choice}
We perform our experiments on two practical reward functions: cumulative length of \textit{human responses} in a conversation (measured by tokens) and the cumulative harmful score of a conversation (we treat the negative of this harmful score as the reward to be maximized). These reward functions are practical in real-world settings for a few reasons. Selecting LLM responses that maximizes the (expected) cumulative length of human responses in a conversation serves as surrogate metric for the engagement of a human user in that conversation and therefore is of practical interest to commercial LLM owners. Furthermore, a longer cumulative human response length also implies greater commercial benefits, since many LLM service providers charge based on token count. In addition, it is also of regulatory interest to minimize the harmful content of a conversation as a whole, especially when LLMs are deployed as chatbots across different age groups. In our experiments, we use Llama Guard 2 \citep{metallamaguard2} to measure how safe (conversely, how harmful) a conversation is. Furthermore, we note that both reward functions are dependent on the human user's response. Therefore, this serves as a challenging task because we need to take into account the stochastic transitions within a conversation and long-term rewards to perform conversation planning well. We provide further discussion and examples of some other practical reward functions in \cref{app:practical reward functions}.

\subsection{Experimental setup}
\label{sec:exp_setup}
To project conversation states and responses into a semantic space as detailed in \cref{subsec:semantic-space}, we use the feature layer of Llama Guard 2 \citep{metallamaguard2} as the semantic embedding. During the learning phase, we use conversation data from \citep{zheng2024lmsyschat1m} to train the transition and reward model in semantic space using the techniques introduced in \cref{subsec:learning-transition,subsec:learning-reward}. Notably, we only perform the learning phase \textit{once} and reuse the learnt models for different conversation starters during evaluation. This is similar to real-world deployment for conversational agents: an LLM owner uses large amount of existing conversation data to learn a transition and reward models over semantic space once before deploying  these models to perform \algname for different conversations during runtime.

During runtime evaluation, we use an LLM to propose a set of candidate responses (i.e., actions) for a conversation starter. We then project the conversation starter and candidate actions into the semantic space representations (as introduced in \cref{subsec:semantic-space}) and perform \algname to learn the Q-function. Finally, we use state-action values from the Q-function to select the best LLM response to show the human user after performing \algname. During evaluation, we use an \textit{evaluation LLM} to emulate a human user's response, progressing the conversation to the next turn. This repeats for 5 conversation turns and the evaluation reward is derived from the final conversation using the reward function. A more detailed explanation of our experimental setup and design choices is provided in \cref{app:exp_details} for ease of reproducibility of results.

\subsection{Main results}
\label{subsec:main-result}
We compare \textbf{\algname} with several conversation planning baselines. \textbf{Random} selects a response randomly from the pool of LLM candidate responses, without considering any rewards. \textbf{1-step Greedy} uses the reward function and one step of human user simulation (with an external LLM) to select an LLM response that yields the highest instantaneous reward. We also consider \textbf{\greed}, an even greedier approach where the LLM response is selected without any look-ahead simulation \textcolor{black}{(also known as rejection sampling in some reinforcement learning (RL) works)}. \textbf{Vanilla MCTS} adopts a tree-search planning approach similar to those found in \citep{yu2023promptbasedmcts, koh2024tree} and uses large amount of LLM queries for look-ahead simulation in vanilla MCTS. We provide detailed implementation of each method in \cref{app:exp_details} for reproducibility.

\textbf{\algname achieves higher cumulative rewards than other baselines.} \cref{subfig:harmfulness-main,subfig:length-main} show that \algname achieves higher cumulative rewards than other baselines within 3 seconds of planning time. In particular, \algname is non-myopic, performing planning in semantic space and hence outperforms myopic approaches like \greed and 1-step Greedy because it takes into account the stochastic transitions within conversation to infer long-term rewards. In addition, \algname also outperforms conventional planning algorithms like \algvanilla \citep{yu2023promptbasedmcts}. This corroborates our claim in \cref{subsec:bottleneck} that \algvanilla can only perform simulation within a very narrow scope under realistic amount of planning budget and therefore performs poorly. In fact, we do not see an increase in rewards gained when \algvanilla is allocated a few more seconds of planning time; this suggests that \algvanilla requires so much budget to plan effectively that a few additional seconds of planning budget does not matter. On the contrary, \algname, without requiring costly LLM queries, can perform simulation in semantic space much faster under realistic planning budget. As such, the rewards achieved by \algname increases with only a slight increase (a few seconds) in planning budget. Our results can also be interpreted quantitatively by LLM owners: \cref{subfig:length-main} implies that \algname achieves conversations (spanning 5 turns) where human responses are, on average, 150 tokens longer. This can be used by LLM owners to tie in with business metrics such as application engagement and customer interest.

\begin{figure}[t]%
    \centering
    \subfloat[\centering Harmfulness \label{subfig:harmfulness-main} ]{{\includegraphics[width=3.5cm, valign=t]{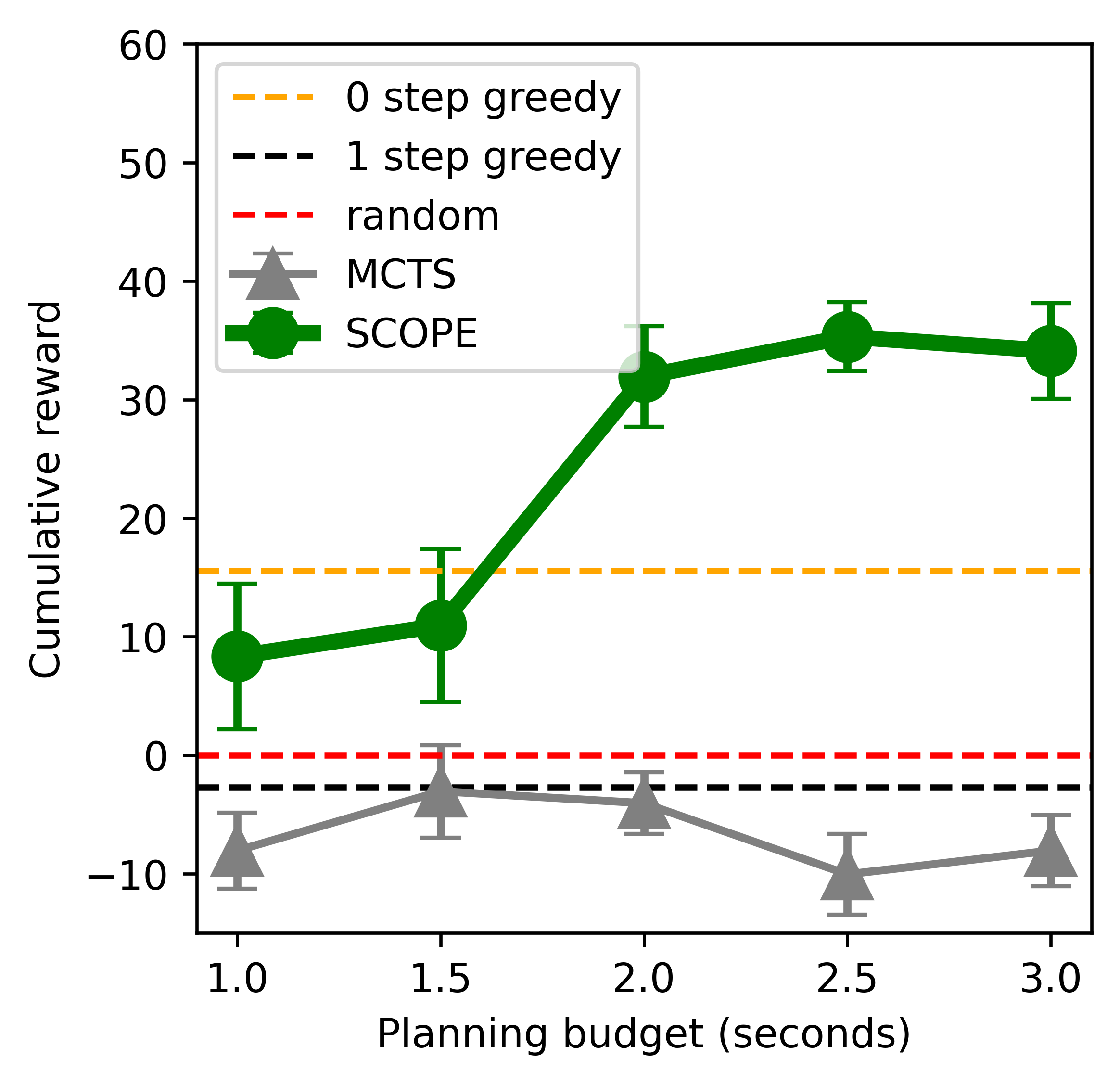} }}%
    \subfloat[\centering Human response length\label{subfig:length-main}]{{\includegraphics[width=3.5cm, valign=t]{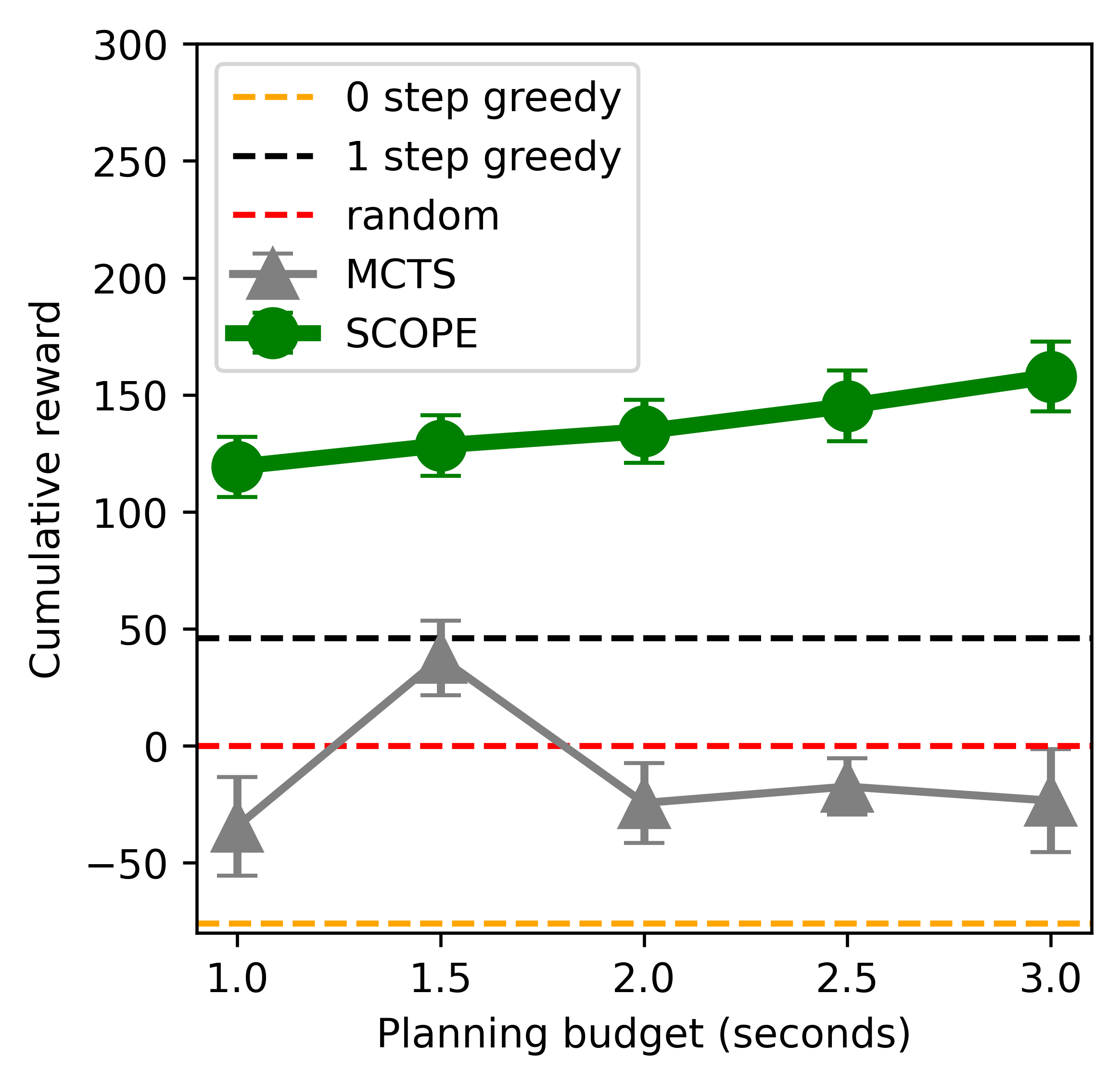} }}%
    \subfloat[\centering Harmfulness \label{subfig:harmfulness-viz}]{{\includegraphics[width=3.3cm, valign=t]{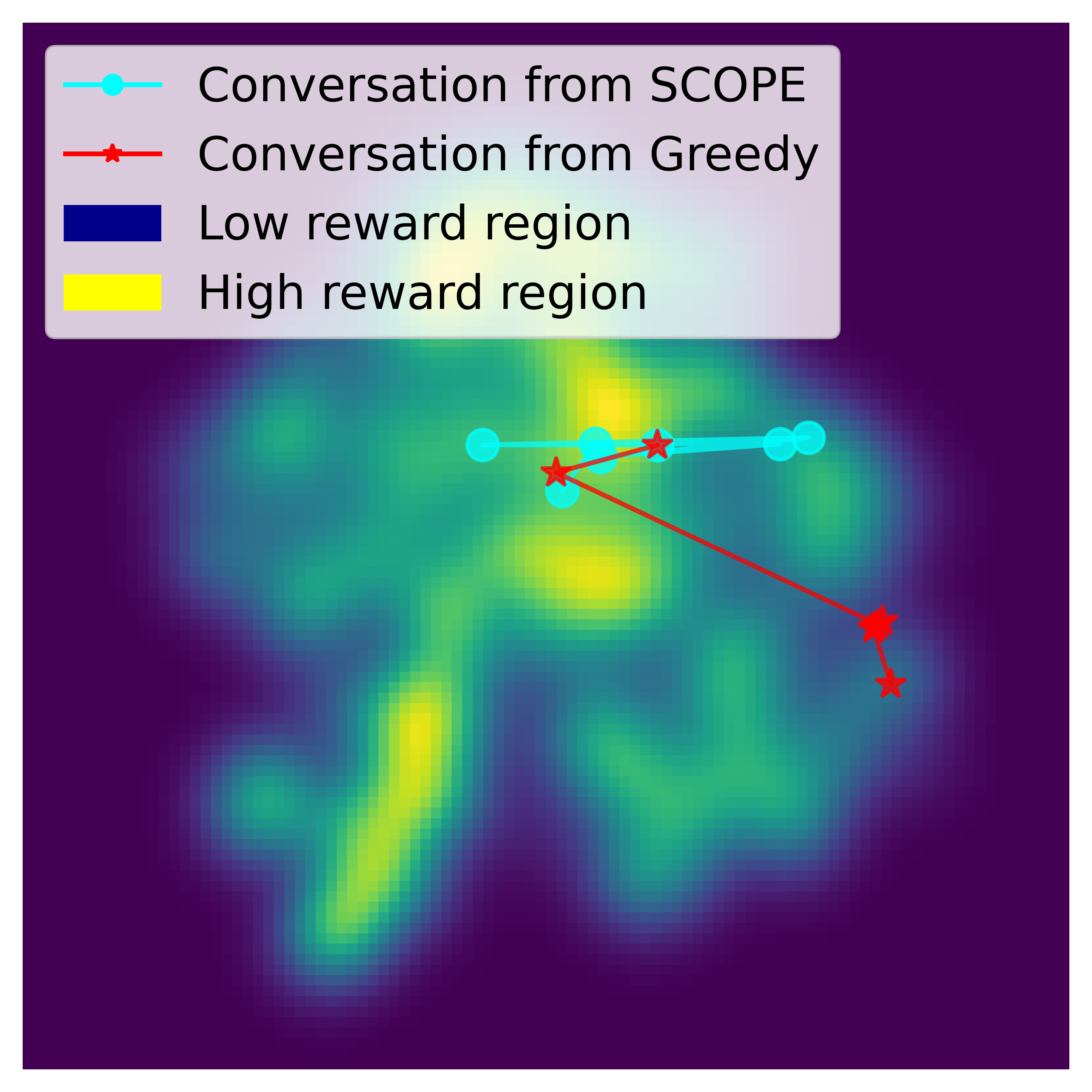} }}%
    \subfloat[\centering Human response length \label{subfig:length-viz}]{{\includegraphics[width=3.3cm, valign=t]{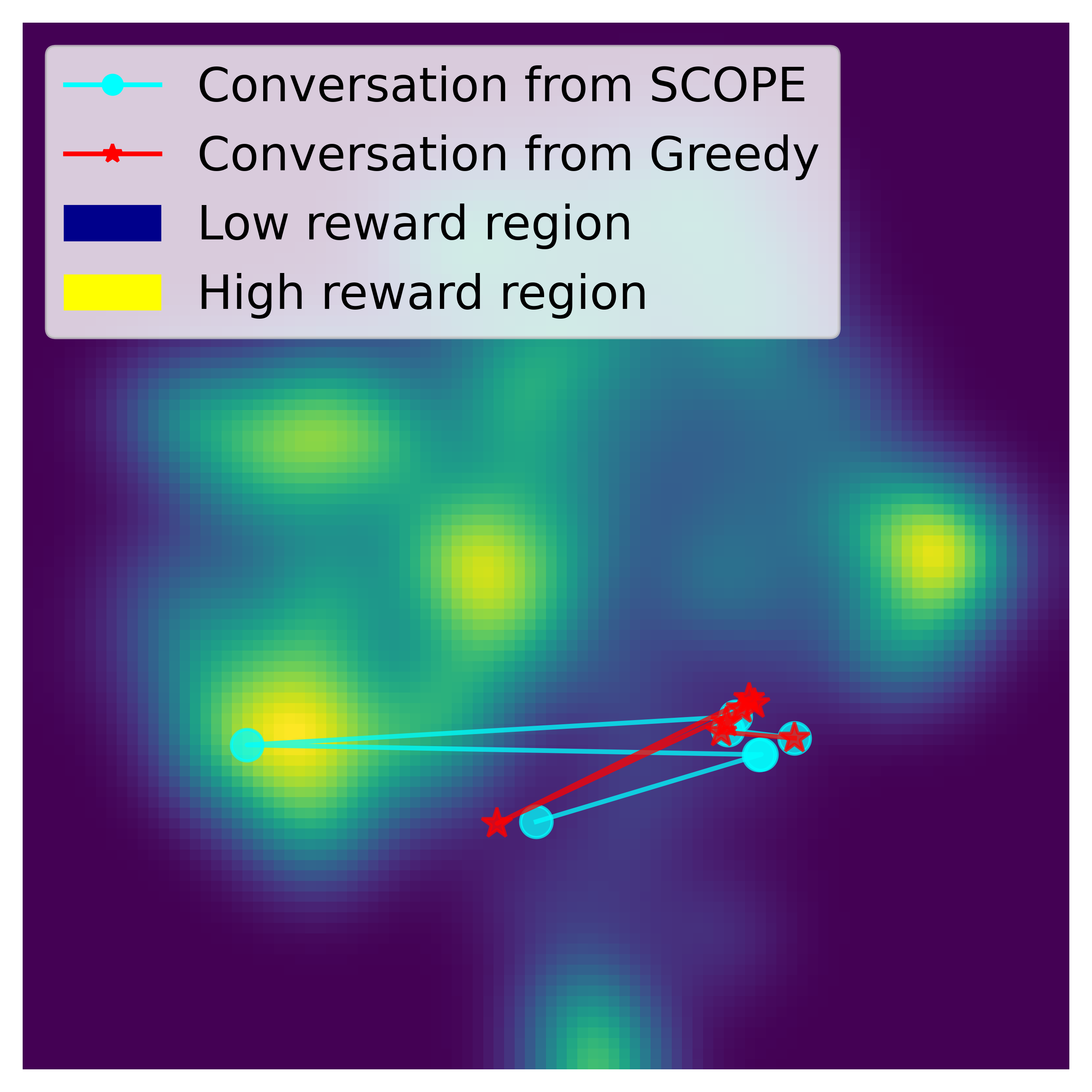} }}%
    \vspace{-3mm}
    \caption{ (a) and (b) show the average cumulative rewards (higher is better) gathered by \textbf{\algname} compared to other baselines over 100 different conversation starters for the harmfulness and human response length reward function over 5 conversation turns. The $x$-axis shows the effect of increased planning budget (time). (c) and (d) show visualization of a conversation generated from \textbf{\algname} and \textbf{1-step Greedy} in the semantic space.
    }%
    \label{exp-results:main}%
    \vspace{-3mm}
\end{figure}

\textbf{Visualization of conversations generated in semantic space.} \cref{subfig:harmfulness-viz,subfig:length-viz} show examples of conversations generated via \algname in semantic space (reduced to a lower dimensional space with t-SNE \citep{van2008tsne}) w.r.t. both reward functions. Expectedly, the conversation path produced by \algname (\textcolor{cyan}{cyan path}) passes through regions with higher rewards. On the other hand, conversation path from myopic approaches (i.e., greedy) ends up in regions with low rewards (e.g., \textcolor{red}{red path} on the left of \cref{subfig:harmfulness-viz}). This suggests that \algname exploits conversation transitions in the semantic space to account for long-term rewards, selecting better LLM responses. Myopic approaches cannot do so. Lastly, we provide some qualitative examples of LLM responses selected by \algname and other methods during each conversation evaluation in \cref{app:qualitative}. These qualitative examples provide insights as to \textit{why} \algname selects LLM responses better than other methods.
\begin{figure}[ht]
    \centering
    \subfloat[\centering Harmfulness]{{\includegraphics[width=0.45\textwidth]{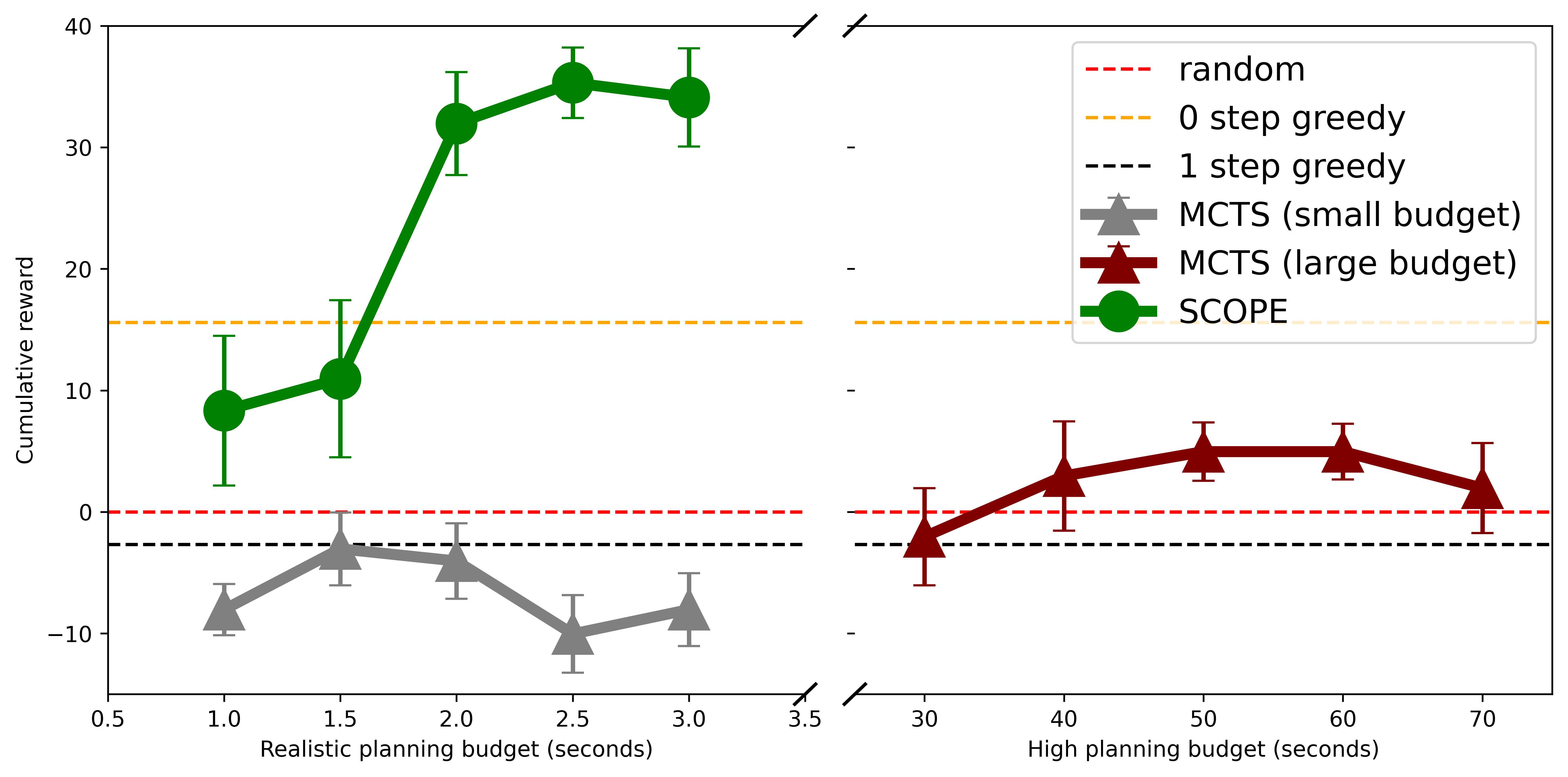} }}%
    \quad
    \subfloat[\centering Human response length]{{\includegraphics[width=0.45\textwidth]{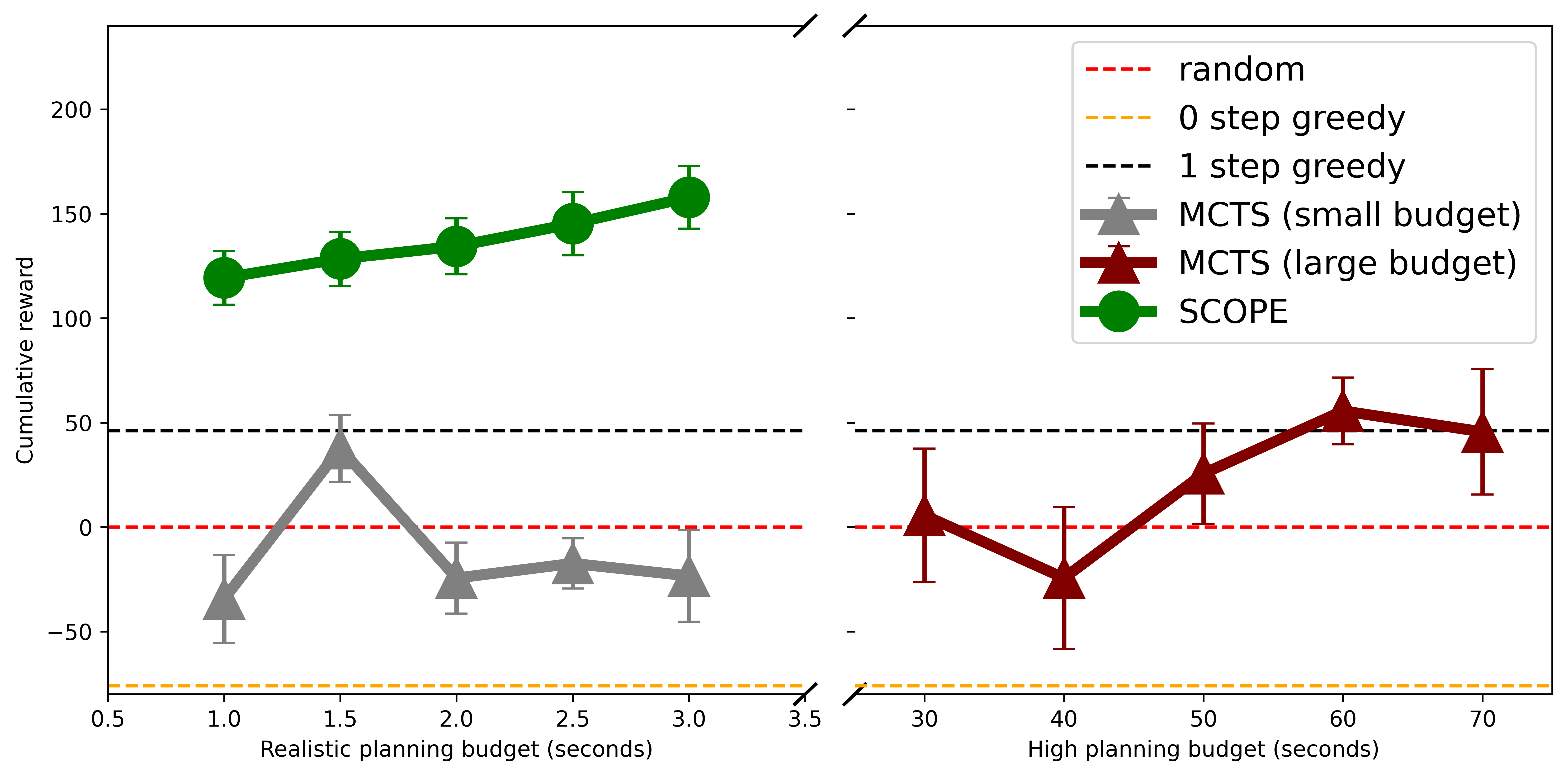} }}%
    \caption{Comparison of \algname with larger planning budget \algvanilla (higher is better).}%
    \label{exp-results:budget}%
\end{figure} 

\textbf{\algname plans 70 times faster than \algvanilla.} \cref{exp-results:budget} shows how \algname fares against \algvanilla under higher planning budget (around 70 times longer planning time). We show that \algname (\textcolor{teal}{green line}), under a realistic planning budget of 1 second, achieves larger rewards than \algvanilla even when the latter is allocated a planning budget of 70 seconds. This suggests that conventional simulation-based planning algorithm indeed require large planning budget due to the use of excessive LLM queries for look-ahead simulations. Thus, even at higher planning budgets, \algvanilla can only perform simulation within a narrow scope and cannot achieve higher cumulative rewards than \algname. In \cref{app:time-per-rollout}, we show that each simulation rollout in \algname takes 92 times faster than that in \algvanilla, and serves as the main reason why \algname is able to plan much faster, achieving better cumulative rewards in real-time conversations.

\subsection{Ablation study}
\label{sec:ablation_study}

\begin{figure}[ht]%
    \centering
    \begin{subfigure}[t]{.23\textwidth}
    \includegraphics[width=\linewidth]{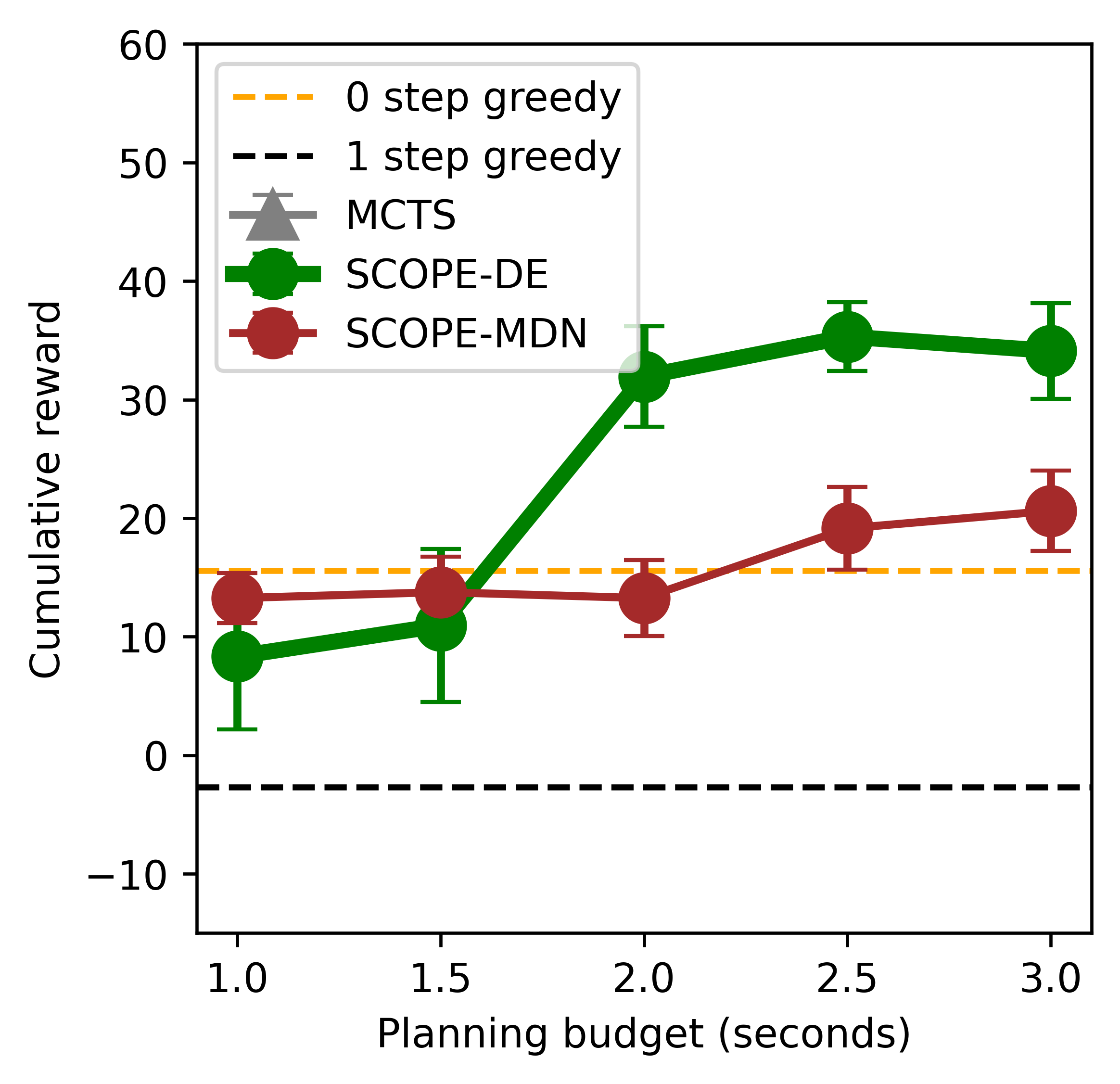}
    \caption{Harmfulness}\label{fig:ablation_transition_harmful}
    \end{subfigure}
    \begin{subfigure}[t]{.23\textwidth}
    \includegraphics[width=\linewidth]{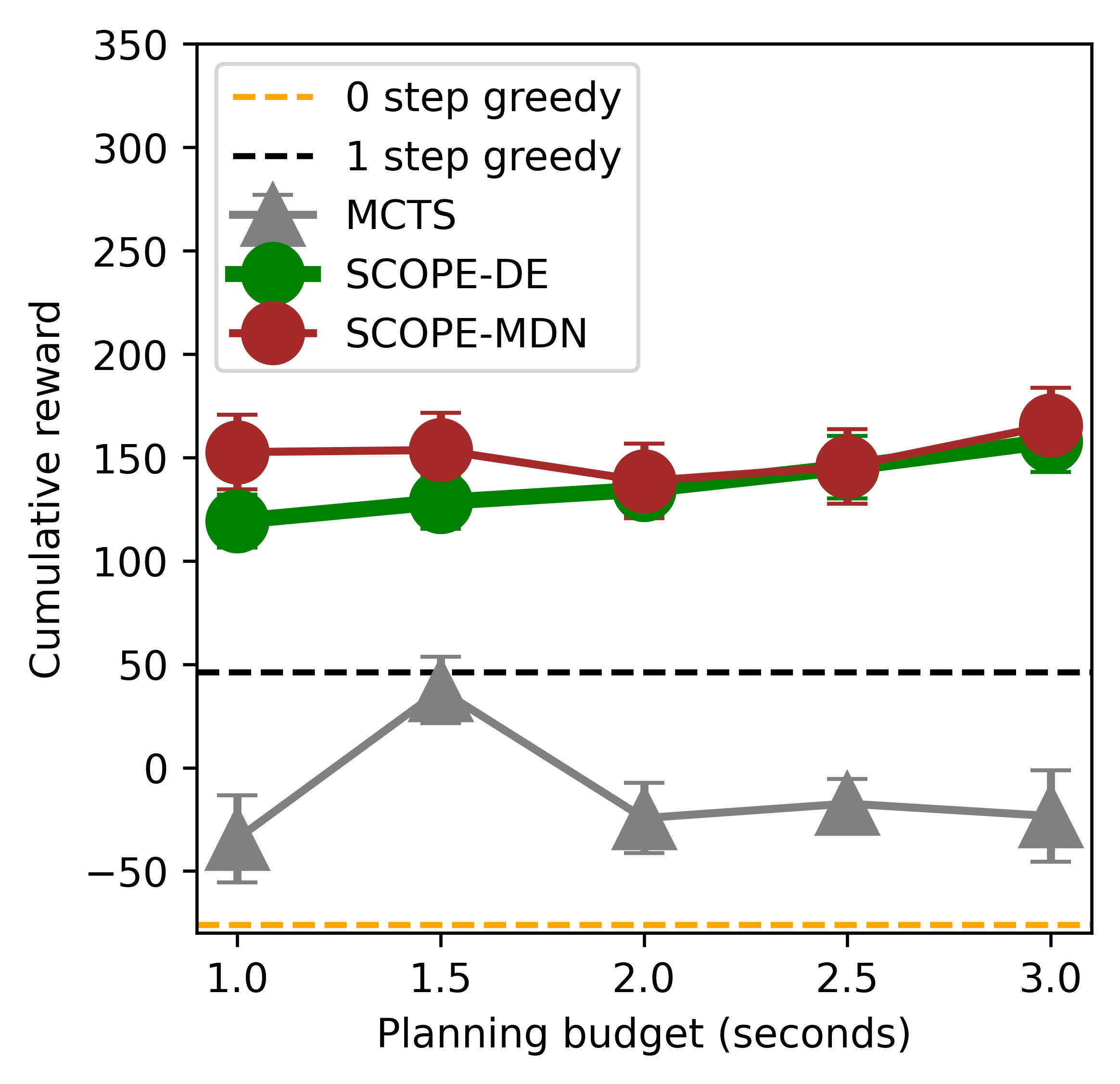}
    \caption{Length}
    \label{fig:ablation_transition_length}
    \end{subfigure}
    \begin{subfigure}[t]{.23\textwidth}
    \includegraphics[width=\linewidth]{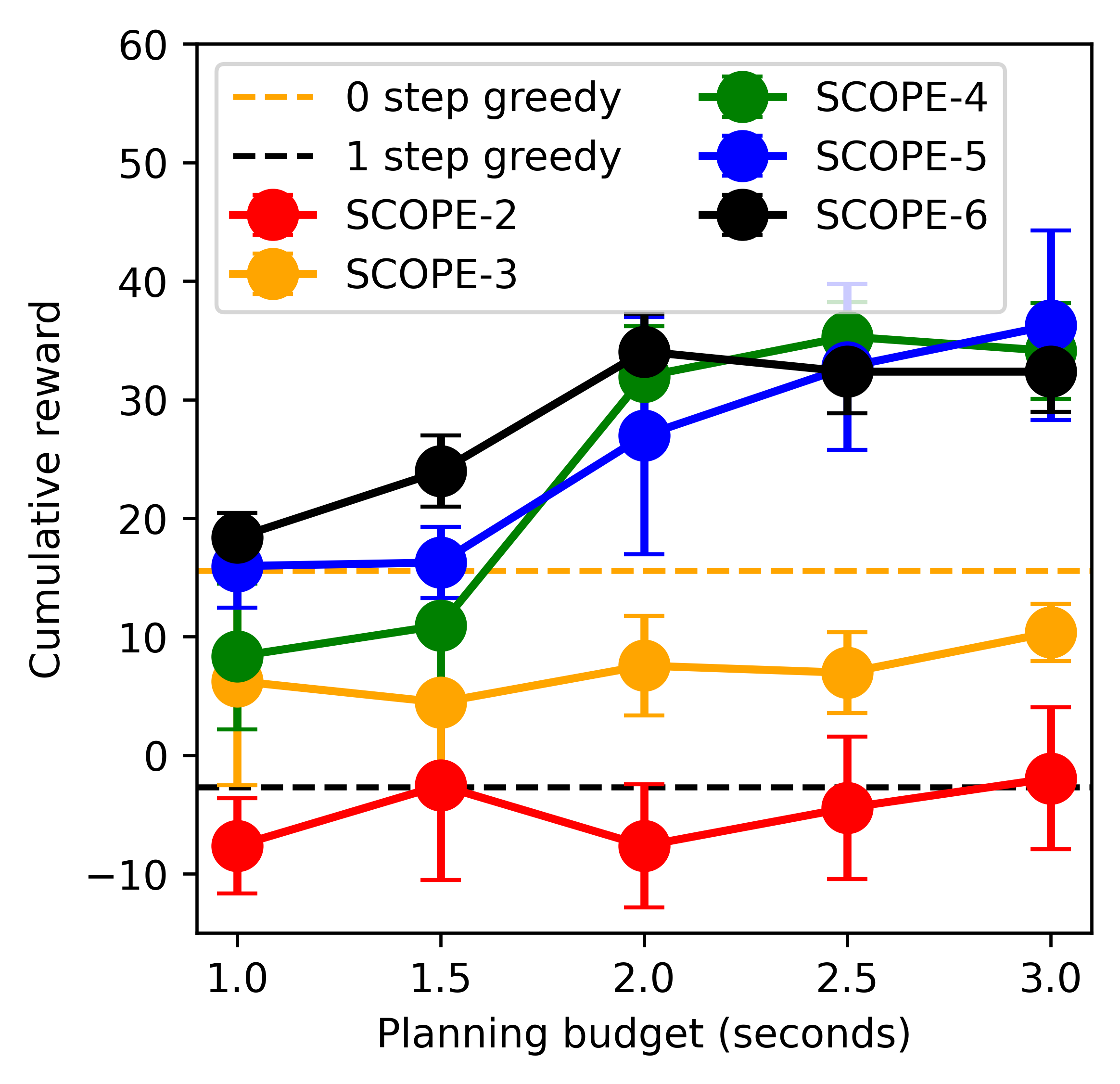}
    \caption{Harmfulness}\label{fig:plan_depth_harmful}
    \end{subfigure}
    \begin{subfigure}[t]{.23\textwidth}
    \includegraphics[width=\linewidth]{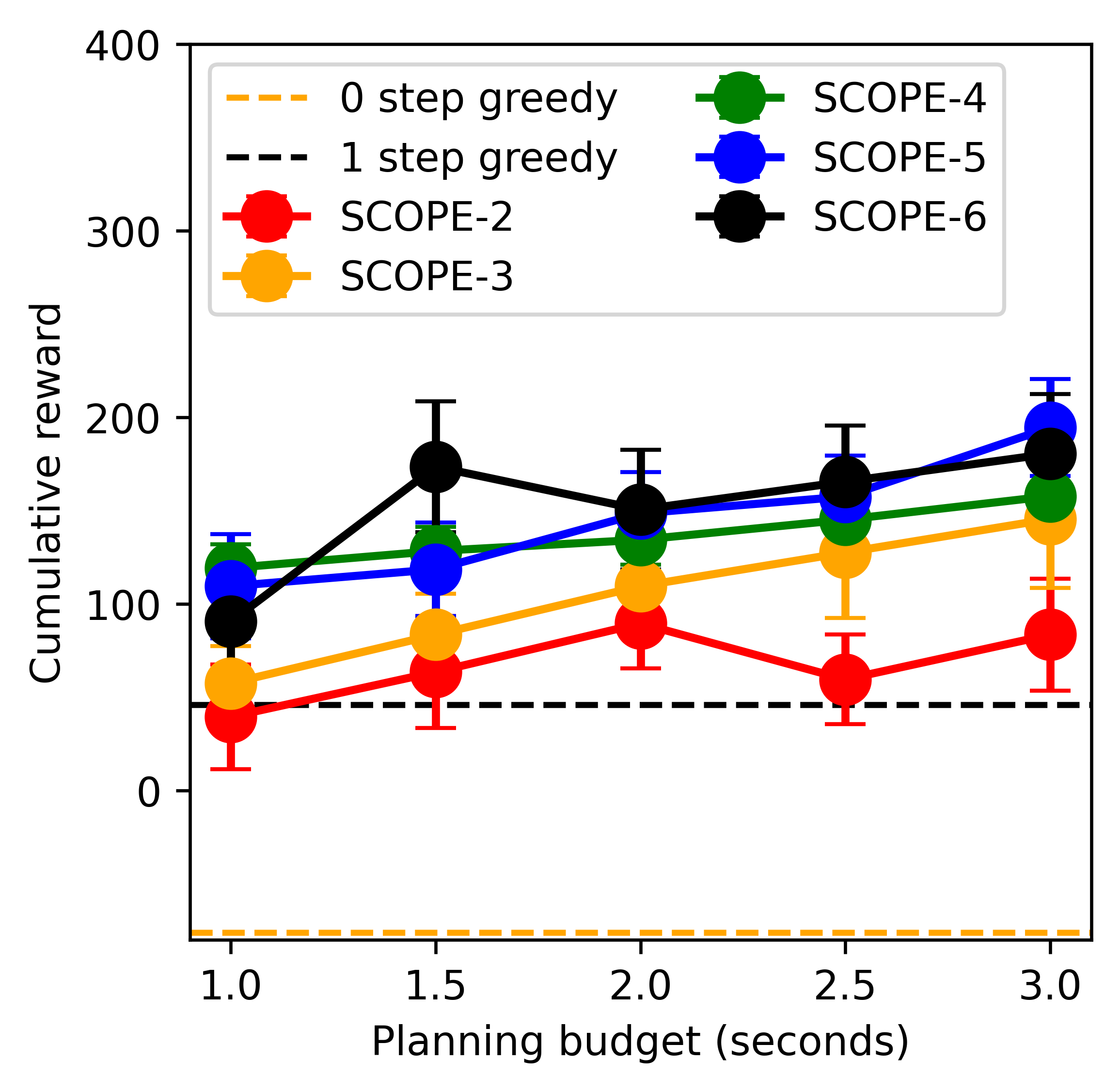}
    \caption{Length}\label{fig:plan_depth_length}
    \end{subfigure}
    \caption{(a) and (b): Ablation study on how transition model choice affects \algname's performance. \algname-DE uses deep ensembles \citep{deep_emsemble_moe} while \algname-MDN uses MDNs \citep{bishop1994mixture}. (c) and (d): Ablation study on how search depth affect \algname's performance.}%
    \label{fig:ablation}%
\end{figure}

We also conduct ablation studies shown in \cref{fig:ablation} to tease out the influence of various components in \algname. In 
\cref{fig:ablation_transition_harmful,fig:ablation_transition_length}, we investigate the effect of different transition models on \algname. In \cref{fig:plan_depth_harmful,fig:plan_depth_length}, we investigate the effect of varying search depth $D$ (introduced in \cref{alg:FMOSS}) on \algname. To reduce plot clutter, we removed performance of \textbf{\algvanilla} and \textbf{Random} from the figures because they tend to do worse than greedy approaches.

\textbf{Effect of different model choices of $\widetilde{T}$ on \algname.} We used MDN \citep{bishop1994mixture} and deep ensembles \citep{deep_ensembles} as model choices for $\widetilde{T}$. More details on these models can be found in \cref{app:transition}. \cref{fig:ablation_transition_length} shows that for human response length, the choice of transition model $\widetilde{T}$ does not influence \algname's performance much. On the other hand, \cref{fig:ablation_transition_harmful} shows that for the harmful reward function, \algname with deep ensembles generally outperforms MDN, suggesting that MDN might not learn well and is susceptible to training instability and model collapse \citep{Makansi_2019_CVPR_MDN_collapse}. An important point to note is that the prediction quality of a transition model (w.r.t.~the training conversation dataset) is not an accurate indicator of \algname's performance as a whole. From literature \citep{chen2024towardsautoai}, it is difficult to pinpoint the exact influence of a single component (e.g., transition model) on an overall algorithm like \algname. We provide more discussion on the relationship between transition model's local prediction quality and \algname's overall performance in \cref{app:autoai}.

\textbf{Effect of varying search depths.} \cref{fig:plan_depth_harmful,fig:plan_depth_length} show that larger search depths generally improve the performance of \algname across different planning budgets. This suggests that larger search depths allow \algname to simulate longer conversations in semantic space, inferring rewards gathered across higher number of conversation turns. Conversely, when the search depth is small (\textcolor{red}{red line}), \algname suffers from poor performance because it cannot plan effectively for rewards in later conversation turns. Interestingly, we observe that the effect of search depth seems to plateau at a depth of 4 conversation turns (i.e., \textcolor{blue}{blue}, \textcolor{teal}{green}, and \textbf{black} lines have comparable performance). This suggests that \algname does not have to search too deeply to infer how good an LLM response is.

    


\section{conclusion}
Our paper presents \algname, a novel approach which exploits compact semantic representation of conversations to learn stochastic transitions of conversation and their associated rewards in semantic space. During runtime, \algname uses  MCTS to plan entirely in semantic space at a broader scope without needing additional LLM queries. We show that \algname attains larger cumulative rewards as compared to other simulation-based baselines in our experiments. Our work removes the need to use costly LLM queries for simulation and presents a paradigm shift from conventional simulation-based conversation planning.
\textcolor{black}{In addition, one limitation of \algname is that we kept our transition models fixed, and did not account for different behaviors of different LLM models. Adapting \algname to different LLM models during deployment is a promising future research direction and could bring further performance gains (some possible improvements are discussed in \cref{app:transition}).}


\newpage
\section*{Reproducibility}
We have released our code in the GitHub link: \url{https://github.com/chenzhiliang94/convo-plan-SCOPE} for reproducibility of our experiments. The experimental setup, models, prompts used for LLMs and hardware can be found in \cref{sec:exp_setup,app:transition,app:exp_details}.

\section*{Acknowledgment}
This research is supported by the National Research Foundation Singapore and the Singapore Ministry of Digital Development and Innovation, National AI Group under the AI Visiting Professorship Programme (award number AIVP-$2024$-$001$).
Zhiliang Chen is supported by the Institute for Infocomm Research of Agency for Science, Technology and Research (A$^\star$STAR).
Xinyuan Niu is supported by the Centre for Frontier AI Research of Agency for Science, Technology and Research (A$^\star$STAR).
We would like to acknowledge that computational work involved in this research was partially supported by NUS IT’s Research Computing group using grant numbers NUSREC-HPC-00001. 

\bibliography{iclr2024_conference}
\bibliographystyle{iclr2024_conference}

\newpage
\appendix
\section{Appendix}

\subsection{Myopic approaches cannot do conversation planning} \label{app:myopic-flaws}
In this subsection, we provide an illustrative example on why myopic approaches cannot select the optimal LLM response. In \cref{fig:myopic-cannot-plan}, there are two candidate LLM responses (\textbf{A} and \textbf{B}) for the human user's prompt. Here, our goal is to minimize the harmful content of the overall conversation. If we simply adopt a myopic approach and check the harmful content of LLM response A and B with a toxic classifier such as Llama-guard 2 \citep{metallamaguard2}, both responses are deemed harmless and we could choose either of the LLM responses to show to the user. However, if we perform look-ahead simulation of future conversations (with a simulator of the human user, possibly with an LLM or a lightweight transition model $\tilde{T}$ introduced in our paper), we find that LLM response B has a probability of causing the human user to produce harmful content, leading to harmful conversations. This arises due to close semantics between ``vapes'' and ``drugs'' in a conversation setting (and, our conversation simulator reasons that it is possible that the topic of drugs might arise in future conversations).  
\begin{figure}[ht]
    \centering
    \includegraphics[scale=0.45]{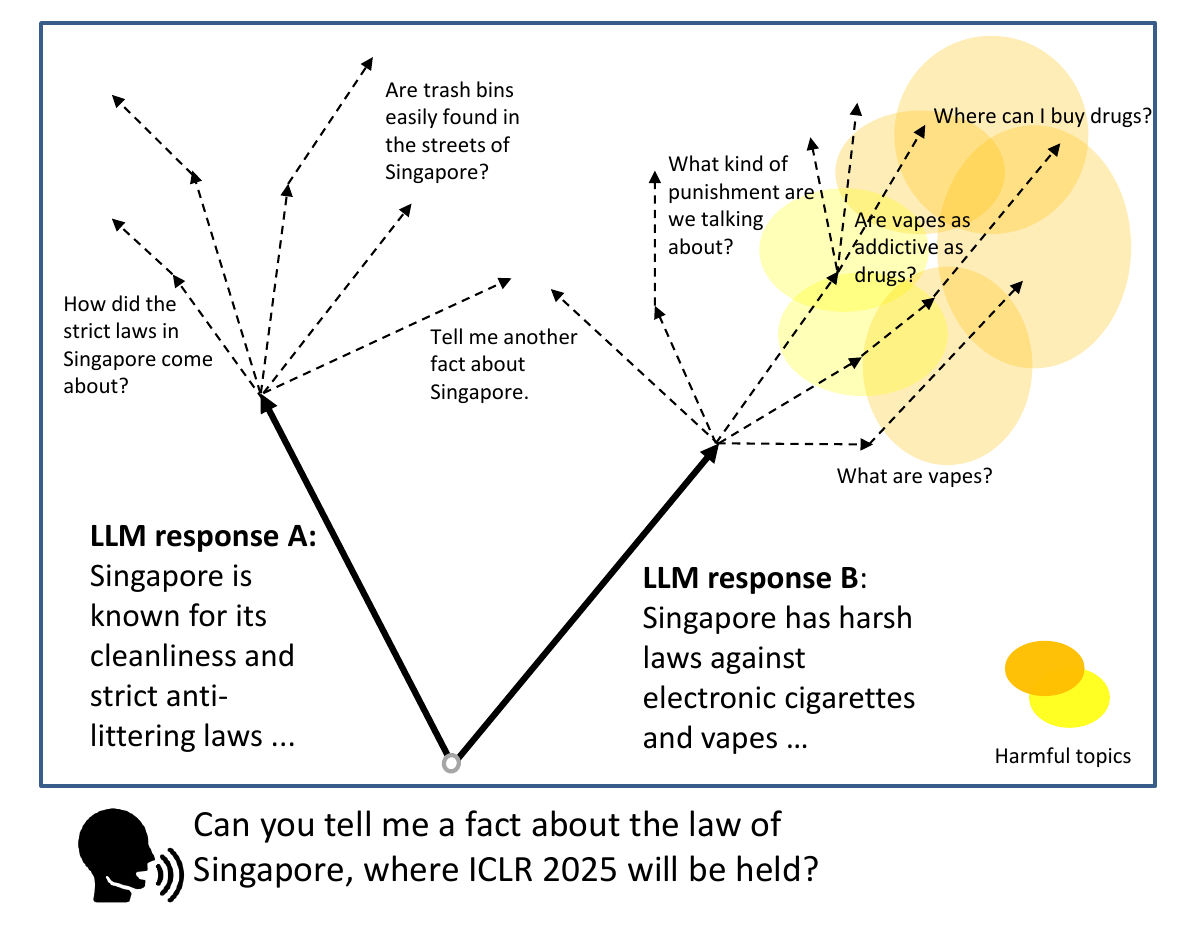}
    \caption{Myopic approaches cannot discern whether LLM response \textbf{A} and \textbf{B} lead to safer conversations. However, conversation planning reasons about future transitions with look-ahead simulations and selects LLM response \textbf{A} because it leads to safer conversations.}
    \label{fig:myopic-cannot-plan}
\end{figure}
\subsection{Overview of Monte Carlo Tree Search (MCTS)} \label{app:mcts-overview}

Our approach, \algname, uses MCTS as its backbone. In MCTS, we assume access to a budget of $K$ iterations and each iteration consists of four phases: \textbf{selection}, \textbf{expansion}, \textbf{simulation}, and \textbf{update}. At iteration $k$ during the \textbf{selection} phase, we traverse a search tree from the starting state according to the Upper-confidence Tree (UCT) \citep{UCT} tree-search policy:
$\pi(s) \triangleq \arg\max_{a\in A}\left(Q_{k}(s,a) + \lambda \sqrt{{(\log N_k(s))}/{N_k(s,a)}}\right)$
where $Q_{k}(s,a)$ is the state-action value estimate of selecting action $a$ at state $s$ during iteration $k$, $N_k(s)$ is the number of times $s$ is visited, $N_k(s,a)$ is the number of times action $a$ has been selected at $s$, and $\lambda$ is a constant which encourages exploration when selecting actions. This continues until an unexplored action is chosen at a leaf state. During \textbf{expansion}, we perform simulation for a single conversation turn with the unexplored action and add the resulting new state to the search tree. Then, we perform \textbf{simulation} rollout from the new state using randomly selected LLM responses and human user simulation until termination (e.g., a certain conversation depth). With the ubiquitous success of LLMs, using an external LLM as a simulator of human user \citep{yu2023promptbasedmcts,yao2023treethoughtsdeliberateproblem} has become a popular way to perform simulation in MCTS (although we argue in the next section that doing so is impractical). Finally, the state-action value is \textbf{updated} with observed cumulative reward $\hat{R}$ (with discount factor $\gamma$) at the end of iteration $k$ via $Q_{k}(s,a)=Q_{k-1}(s,a)(1-1/{N_{k}(s,a)}) + \hat{R}/{N_k(s,a)} $. The MCTS process repeats for $K$ iterations and the action with the highest state-action value: $\arg\max_a Q_k(s,a)$ is executed. In a conversation setting, we use an LLM to generate the initial pool of candidate LLM responses, in which we select the one with the highest state-action value after performing MCTS.

\subsection{Proof of MDP transformation in semantic space}
\label{app:mdp-transformation}
We show that \cref{MDP-original,MDP-semantic} are equivalent in the following proposition.

\begin{proposition}
\label{equivalence proposition}
Let $f: \mathbb{R}^d \mapsto \mathbb{R}^n$ be a bijective semantic embedding function. Let $s,s' \in \mathbb{R}^d$, $a \in \mathbb{R}^b$, $\embeda \in \mathbb{R}^n$ be that defined in \cref{subsec:semantic-space} and $V,T,R$ be the value, transition and reward function defined in \cref{subsec:setting}. Assume there exists $\widetilde{T}: \mathbb{R}^{n\times n \times n} \mapsto \mathbb{R}$ and $\widetilde{R}:\mathbb{R}^{n\times n\times n} \mapsto{\mathbb{R}}$ such that $\widetilde{T}(\embeds, \embeda, \embeds')=T(s,a,s')$ and $\widetilde{R}(\embeds,\embeda, \embeds')=R(s,a,s')$ for $s \in \mathbb{R}^d,s' \in \mathbb{R}^d, a \in \mathbb{R}^b$. Then an action $a_s^*$ at state $s$ achieves the highest cumulative reward for original MDP \ref{MDP-original}:

\begin{equation}
    a_s^* \triangleq \arg\max_a \sum_{s'}T(s,a,s')(R(s,a,s') + \gamma V(s')),
\end{equation}
if and only if the same action's semantic representation (defined in \cref{subsec:semantic-space}) is also the solution for MDP \ref{MDP-semantic} in semantic space:

\begin{equation}
    f(a_s^*) \triangleq \embeda^{*}= \arg\max_{\embeda} \sum_{\embeds'}\widetilde{T}(\embeds,\embeda, \embeds')(\widetilde{R}(\embeds,\embeda,\embeds') + \gamma \widetilde{V}(\embeds')),
\end{equation}
where $\widetilde{V}(\embeds)$ is recursively defined similar to $\cref{MDP-value}$.
\end{proposition}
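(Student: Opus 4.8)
The plan is to prove the equivalence by induction on the search horizon, exploiting the bijectivity of $f$ to set up an exact correspondence between the value functions of the two MDPs. First I would establish the base case: when the horizon is $0$, both $V$ and $\widetilde{V}$ are identically zero (or whatever terminal value is assumed), so trivially $\widetilde{V}(\embeds) = \widetilde{V}(f(s)) = V(s)$ for all $s$. Then, assuming $\widetilde{V}(f(s)) = V(s)$ for horizon $h$, I would show it holds at horizon $h+1$ by unrolling one step of the Bellman recursion \cref{MDP-value}. The key observation is that since $f$ is a bijection, the change of variables $\embeds' = f(s')$ is a perfect relabelling of the summation index: $\sum_{\embeds'} \widetilde{T}(\embeds,\embeda,\embeds')(\cdots) = \sum_{s'} T(s,a,s')(\cdots)$ because each term matches exactly under the hypotheses $\widetilde{T}(\embeds,\embeda,\embeds') = T(s,a,s')$ and $\widetilde{R}(\embeds,\embeda,\embeds') = R(s,a,s')$, together with the inductive hypothesis on the value functions. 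Similarly, $\arg\max_{\embeda}$ ranges over exactly the set $\{f(a) : a \in A\}$, so the maximiser in semantic space is the image under $f$ of the maximiser in the original space.

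Once the value-function identity $\widetilde{V}(f(s)) = V(s)$ is in hand for the relevant horizon, the main claim follows directly: the objective in \cref{MDP-semantic} evaluated at $\embeda = f(a)$ equals the objective in \cref{MDP-original} evaluated at $a$, term by term, so they attain their maxima at corresponding actions. For the ``if and only if'' I would argue both directions at once from this pointwise equality of objectives — $a$ maximises the original objective iff $f(a)$ maximises the semantic objective — using that $f$ restricted to the (finite) action set $A$ is a bijection onto its image, which is the feasible set for the $\arg\max$ in \cref{MDP-semantic}. A small remark is needed to handle non-uniqueness of the $\arg\max$: the statement should be read as ``$a_s^*$ is \emph{a} maximiser iff $f(a_s^*)$ is \emph{a} maximiser,'' and the term-by-term equality makes this immediate.

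The main obstacle is mostly bookkeeping rather than conceptual: I need to be careful that the recursion in \cref{MDP-value} is over an infinite or indefinite horizon, so strictly speaking the induction should be phrased either as (i) a finite-horizon argument matching the conversation horizon mentioned in the text (e.g.\ $5$ turns), which is the cleanest, or (ii) a fixed-point/contraction argument invoking $\gamma < 1$ to argue both $V$ and $\widetilde{V}$ are the unique fixed points of Bellman operators that are conjugate via the relabelling $s \mapsto f(s)$, hence correspond. I would go with the finite-horizon induction since the paper frames planning over a fixed conversation horizon, and note in passing that the discounted infinite-horizon case follows by the same conjugacy argument. A secondary subtlety worth flagging explicitly is that the proposition silently assumes $f$ maps the action space $\mathbb{R}^b$ into $\mathbb{R}^n$ as well (the statement writes $f(a_s^*)$), whereas \cref{subsec:semantic-space} actually defines the semantic action as a \emph{difference} $\embeda = f(s+a) - f(s)$; I would either adopt the proposition's simplified convention consistently or remark that the argument is unchanged as long as the action relabelling $a \mapsto \embeda$ is injective on $A$ given the current state.
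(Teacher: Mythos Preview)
Your proposal is correct and follows essentially the same approach as the paper: exploit the bijectivity of $f$ together with the assumptions $\widetilde{T}=T$, $\widetilde{R}=R$ to show the two Bellman problems coincide under the relabelling $s\mapsto f(s)$, and then read off the correspondence of maximisers. Your inductive argument for the value-function identity $\widetilde{V}(f(s))=V(s)$ is in fact more rigorous than the paper's proof, which simply asserts this equality in one line via ``the recursive definition'' without spelling out the induction or the horizon issue; your remarks on non-uniqueness of the $\arg\max$ and on the action-relabelling convention are also useful clarifications that the paper's terse argument omits.
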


\begin{proof}
\cref{equivalence proposition} can be proven by using the assumption that $\widetilde{T}=T$ and $\widetilde{R}=R$ in their respective domains. That is, even after projecting states and actions into the semantic space, the transition probability and reward values are the same. Hence, the original MDP problem can be rewritten as:

\begin{equation}
    \begin{split}
         \arg\max_a \sum_{s'}T(s,a,s')(R(s,a,s') + \gamma V(s')) &\stackrel{(1)}{=} 
         \arg\max_a \sum_{s'}\widetilde{T}(\embeds,\embeda,\embeds')(\widetilde{R}(\embeds,\embeda,\embeds') + \gamma V(s'))
         \\ 
         &\stackrel{(2)}{=} \arg\max_a \sum_{s'}\widetilde{T}(\embeds,\embeda, \embeds')(\widetilde{R}(\embeds,\embeda,\embeds') + \gamma \widetilde{V}(\embeds')) \\
        &\stackrel{(3)}{=}
        \arg\max_{\embeda} \sum_{\embeds'}\widetilde{T}(\embeds,\embeda, \embeds')(\widetilde{R}(\embeds,\embeda,\embeds') + \gamma \widetilde{V}(\embeds'))
    \end{split}
\end{equation}
where $\stackrel{(1)}{=}$ uses the assumption that $\widetilde{T}(\embeds, \embeda, \embeds')=T(s,a,s')$ and $\widetilde{R}(\embeds,\embeda,\embeds')=R(s,a,s')$, $\stackrel{(2)}{=}$ uses the recursive definition of state value: $\widetilde{V}(\embeds)=V(s)$. Lastly, $\stackrel{(3)}{=}$ holds true because $\embeda \triangleq f(a)$, $\embeds \triangleq f(s)$, and we have assumed that function $f$ is bijective. Therefore, we have shown that both MDPs are identical.
\end{proof}

Interestingly, the assumptions of $\widetilde{T}=T$, $\widetilde{R}=R$ and bijective $f$ of this MDP transformation in semantic space have real-world interpretations.
$\widetilde{T}=T$, $\widetilde{R}=R$
implies that we can learn a transition and reward model in semantic space (as detailed in \cref{subsec:learning-transition,subsec:learning-reward}) to represent rewards and transitions of conversations in semantic space perfectly. A bijective $f$ implies that every possible natural language conversation can be projected to a unique point in semantic space. In practice, it might be impossible to learn these models perfectly (due to noise in data, training imperfections or information loss from semantic representations) in semantic space. Despite this, our empirical results in \cref{subsec:main-result} show that off-the-shelf modern semantic embedding model $f$ and appropriately learnt $\widetilde{T}$ and $\widetilde{R}$ are sufficient for \algname to outperform conventional planning baselines.

\subsection{Details on learning reward model in semantic space}
\label{app:reward}
In our problem setting, a reward model $F_R(\embeds,\theta_R)$ helps to indicate to us the reward associated with each point $\embeds$ in semantic space. In this section, we first (a) provide details on how to train such a model and (b) explain how to recover the \textit{instantaneous reward} $\widetilde{R}(\embeds,\embeda,\embeds')$ from the learnt reward model $F_R(\embeds,\theta_R)$ when performing \algname.

\begin{wrapfigure}{r}{6cm}
\vspace{-5mm}
\includegraphics[width=6cm]{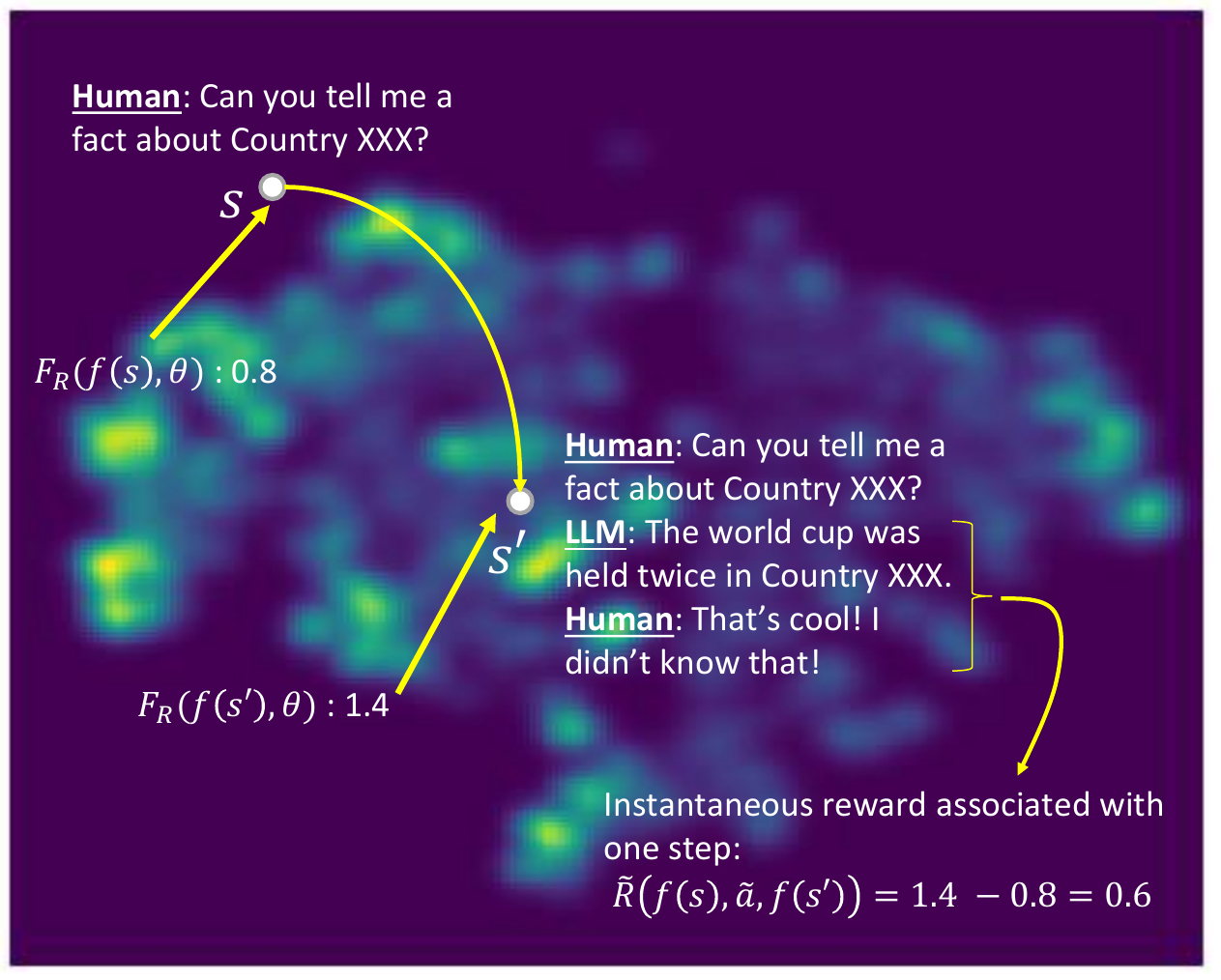}
\caption{How instantaneous reward is derived from reward model $F_R(\embeds,\theta_R)$.}
\label{fig:reward-model-difference-explanation}
\vspace{-5mm}
\end{wrapfigure}

To learn $F_R(\embeds,\theta_R)$, we first gather a dataset of conversations $\{s_0, s_1, \dots, s_N \}$ from different domains. In our experiments, this dataset is taken from \citet{li2017dailydialogmanuallylabelledmultiturn,zheng2024lmsyschat1m}. Notably, each datapoint $s_i$ does not have to be a complete conversation (it can be a segment of a conversation). We first label the reward associated with each datapoint $s_i$ depending on the LLM owner's use case (e.g., harmfulness, length of conversation) to obtain a set of reward labels $y_0,y_1,\dots,y_N$. Reward model $F_R(\embeds,\theta_R)$ can then be trained over this regression task via (e.g., mean-squared error) loss function $\min_{\theta_R}\sum_{i=0}^{N}(F_R(\embeds[i],\theta_R)-y_i)^2$. Unlike the semantic transition model in \cref{subsec:learning-transition}, the reward model is deterministic.

During \algname, we cannot use $F_R(\embeds[i],\theta_R)$ directly because we need to keep track of the \textit{instantaneous reward} $\widetilde{R}(\embeds, \embeda,\embeds')$ to update our Q-function as part of the MCTS framework. However, because our reward model gives us the reward associated with the points $\embeds'$ (after transition) and $\embeds$ (before transition) in semantic space, their difference must indicate the change in reward caused by selecting an LLM action. Therefore, the instantaneous reward $\widetilde{R}(\embeds, \embeda,\embeds')$ associated with a conversaton turn after taking an action $\embeda$ is simply the difference: $\widetilde{R}(\embeds, \embeda, \embeds') \approx F_R(\embeds',\theta_R) - F_R(\embeds,\theta_R)$. \cref{fig:reward-model-difference-explanation} provides an intuitive illustration of this recovery process.

\subsection{Practical reward functions}
\label{app:practical reward functions}
In our experiments, we used two practical reward functions. The first is Llama-guard 2 \citep{metallamaguard2}, which gives the harmfulness score (or inversely, the safety reward) of a piece of conversation (including both LLM and human user's response). The second is simply the cumulative length of human response in a multi-turn conversation.

{\color{black}
We selected the use of cumulative length of human response in a multi-turn conversation to be a surrogate metric for
the engagement of a human user in that conversation. To evaluate the suitability of this proxy measure, we measured how often the user is asking questions to show that they are engaged in the conversation, to check if \algname indeed produces more engaging conversations (we still use cumulative length rewards for planning, but during evaluation, we check if the user is asking questions in the resulting conversation). We show in \cref{tab:engage_question} that in conversations produced by \algname, the user on average asks more questions, signaling that they are more engaged with the conversation.

\begin{table}
\color{black}
    \centering
    \caption{\color{black}{Number of questions asked by the user throughout the conversation, to evaluate the engagement of the user in the conversations planned using the cumulative length reward function.}}
    \begin{tabular}{cccc}
    \toprule
         Random & \greed & 1-step greedy & \algname 3s\\
    \midrule
         2.99 & 2.97 & 2.95 & 3.31\\
    \bottomrule
    \end{tabular}
    \label{tab:engage_question}
\end{table}
}

Other reward functions could be relevant in conversation planning. For example, one could be simply concerned if LLM responses are harmful. In this case, the instantaneous reward of an LLM taking an action $a$ is simply the harmful score of that LLM response.
Other reward functions and their effects in conversation planning are left for future research directions.

\algname can be extended to other language-based tasks in the planning domain with sufficient reward shaping \citep{xie2024text2rewardrewardshapinglanguage}. For example, if one is interested for an LLM to generate a sequence of language-based actions (i.e., LLM responses) to solve a Rubick's cube under the presence of a simulator, we can formulate this language-based problem as a MDP which can be solved with \algname (albeit with a deterministic environment). However, these problems are notoriously plagued by the sparse reward issue \citep{rengarajan2022reinforcementlearningsparserewards} because there is only one true winning state. So, we might need to densify the reward landscape by incorporating heuristics into reward functions depending on the problem setting. Furthermore, unlike open-ended conversations, it is unclear at this point whether the language used in planning (e.g., solving a Rubick's cube) can be captured effectively by semantic representations.

{\color{black}
We would like to emphasize that our method, \algname, is reward agnostic and one can use any reward function in \algname to plan. Other conversation engagement metrics or reward functions could be adopted by LLM providers based on their specific applications and use cases.
}

\subsection{Details on learning transition model in semantic space}
\label{app:transition}

{\color{black}
In our actual implementation, to approximate $\widetilde{T}\left(\embeds, \embeda, \embeds' \right)$, we trained two different models to perform the two steps of the conversation transition in semantic space. With the two models, the transition model is able to predict the actions $\embeda$ (as a directional vector in semantic space) that are available at a given state $\embeds$, and then predict the following state $\embeds'$ that the conversation semantic transitions to, given $\embeds$ and $\embeda$.

The first model predicts the LLM action in semantic space $\embeds \rightarrow \embeda$ (indicated as "human -> llm" in \cref{fig:PCA_semantics}). This approximates the LLM response to the users prompt ($f(s) \rightarrow f(s + a)$) in the semantic space. The mean squared error of the predicted semantic action $\hat{\embeda}$ is

\begin{equation}
    {N}^{-1} \sum^{N}_{i=1} \left(
        \hat{\embeda} - 
        \left(f(s + a) - f(s)\right)
    \right)^2
\end{equation}

The second model predicts the human user's next response in semantic space $\left( \embeds, \embeda \right) \rightarrow \embeds'$ (indicated as "llm -> human" in \cref{fig:PCA_semantics}). This approximates the users follow-up response to the LLM's output ($f(s + a) \rightarrow f(s')$) in semantic space. The mean squared error of the predicted new state $\hat{\embeds}'$ is

\begin{equation}
    {N}^{-1} \sum^{N}_{i=1} \left(
        \hat{\embeds}' - 
        f(s')
    \right)^2
\end{equation}

As the first model represents how an LLM responds to a user prompt, the model could be further fine-tuned using data collected from user interactions with the specific LLM model being deployed on, to improve its prediction performance.

Similarly, as the second model represents how a human responds to an LLM output, this model could be fine-tuned with a specific user's demographic information or conversation history (subjected to the user's approval and LLM provider's privacy guidelines), to improve its prediction performance for individual particular user.

Although this work did not explore the fine-tuning of the two models using LLM/user specific data, we believe this would be a promising future direction and \algname can serve as a competitive baseline and foundation for future works on such approaches in terms of performance-efficiency trade off. For instance, the transition models could be further fine-tuned with new conversational data between the user and the specific LLM as these conversations come in during online deployment, to better align the model with the actual conversations that are newly collected.
}

We used \texttt{lmsys/lmsys-chat-1m} dataset \citep{zheng2024lmsyschat1m} as the dataset for training the transition models. We transform each turn of conversations into the semantic embeddings using the embedding model, and 
normalized each dimension of the input embedding and output target labels to mean 0 and standard deviation 1 prior to training. The transition models were trained for 100 epochs, requiring approximately 6-8 hours to train on a single Nvidia H100 GPU.

{\color{black}
Although the LLM in this dataset (\texttt{lmsys/lmsys-chat-1m} consists of conversations mostly with the Vicuna LLM) is different from the LLM used in the experiments of this paper (Llama-3), our strong empirical results show that \algname can generalize to different LLMs.
} We can also train these transition models with data mixtures from different data domains \citep{qiao2025mitigating,chen2025duet}.

For the deep ensemble model (\algname-DE), we used an ensemble of deterministic models trained using different seeds and different train-validation splits of the conversation dataset. We used a Mixture of Experts model \citep{deep_emsemble_moe} as the deterministic model. 4 models were trained for each transition (action model and transition model), each with 2 layers of 4 experts. We used seeds ${0,1,2,3}$ to seed the initialization and train-validation splits for the training of the transition models. 

For the MDN model (\algname-MDN), we used $K=256$ Gaussians, with the fully factored noise model (i.e., diagonal covariance matrix where the noise level for each dimension is predicted separately). The model predicts the probability $\phi$ of each Gaussian component, as well as the mean $\mu$ and standard deviation $\sigma$ of each Gaussian. We used seed $=0$ when training the MDN model.
Typically, MDN models are trained by minimizing negative log-likelihood loss



\begin{equation}
    \label{eqn:mdn_loss}
    \mathcal{L} = -\log \left( p_{\theta_T}\left( \embeda | \embeds \right) \right)
\end{equation}

where the likelihood of the target label under Gaussian mixture can be calculated as:

\begin{equation}
    \label{eqn:likelihood}
    p_{\theta_T}\left( \embeda | \embeds \right) = 
    \sum^{K}_{\MDNidx} \phi_\MDNidx p_{\theta_T}\left( \embeda | \mu_\MDNidx, \Sigma_\MDNidx \right)
\end{equation}

\begin{equation}
    \label{eqn:normal_likelihood}
    p_{\theta_T,\embeds}\left( \embeda | \mu_\MDNidx, \Sigma_\MDNidx \right) = \exp \left( 
        - \frac{1}{2} \left( \embeda - \mu_{\MDNidx} \right)^\intercal \Sigma_{\MDNidx}^{-1} \left( \embeda - \mu_{\MDNidx} \right)
        - \frac{1}{2} \log \det \Sigma_{\MDNidx}
    \right)
\end{equation}

However, as the embedding to harmfulness reward function was reused from the original model, we included an auxiliary loss during the training process to ensure good prediction on both the predicted next state, and its harmfulness score. We noticed that without this auxiliary loss on the predicted reward, the predicted harmfulness score frequently differs from the true harmfulness score despite the good prediction in the transitions. Although the reward $\RNN(\nexts{\embeds},\theta_R)$ is dependent on $\embeda$, where $\RNN(\nexts{\embeds},\theta_R) = R\nexts{\embeds}$ and $R$ is the last layer linear transformation matrix of Llama-guard 2, we treat them as independent so that the model loss will maximize the log-likelihood of both $\embeda$ and $\RNN(\nexts{\embeds},\theta_R)$. The new loss term can be calculated as:

\begin{equation}
    \label{eqn:mdn_loss_}
    \mathcal{L} = -\log \left( 
        \sum^{K}_{\MDNidx}\phi_\MDNidx
        p_{\theta_T}\left( \embeda | \mu_\MDNidx, \Sigma_\MDNidx \right)
        p_{\theta_T,\embeds}\left( \RNN(\nexts{\embeds},\theta_R) | \mu_{\text{harm},\MDNidx}, \Sigma_{\text{harm},\MDNidx} \right)
    \right)
\end{equation}

\begin{multline}
    \label{eqn:normal_likelihood_}
    p_{\theta_T,\embeds}\left( \RNN(\nexts{\embeds},\theta_R) | \mu_{\text{harm},\MDNidx}, \Sigma_{\text{harm},\MDNidx} \right) \\
    = \exp \left( 
        - \frac{1}{2} \left( \RNN(\nexts{\embeds},\theta_R) - \mu_{\text{harm},\MDNidx} \right)^\intercal \Sigma_{\text{harm},\MDNidx}^{-1} \left( \RNN(\nexts{\embeds},\theta_R) - \mu_{\text{harm},\MDNidx} \right)
        - \frac{1}{2} \log \det \Sigma_{\text{harm},\MDNidx}
    \right)
\end{multline}

where the mean and covariance of the harmfulness score is transformed with:

\begin{equation}
    \label{eqn:cov_rot}
    \begin{aligned}
        \mu_{\text{harm},\MDNidx} &= R (\mu_\MDNidx + \embeds) \\
        \Sigma_{\text{harm},\MDNidx} &= R \Sigma_\MDNidx R^\intercal
    \end{aligned}
\end{equation}

\subsection{Visualization of \algname} \label{app:FMOSS-viz}
We provide a visual overview of \algname (\cref{alg:FMOSS}) in \cref{fig:FMOSS-viz}. \algname follows the MCTS framework closely to conduct conversation planning. However, the key difference between \algname and \algvanilla approaches is that \algname exploits semantic embedding function $f$, $\widetilde{R}$ and $\widetilde{T}$ in semantic space to perform MCTS without needing additional LLM queries.

\begin{figure}[ht]
    \centering
    \includegraphics[scale=0.4]{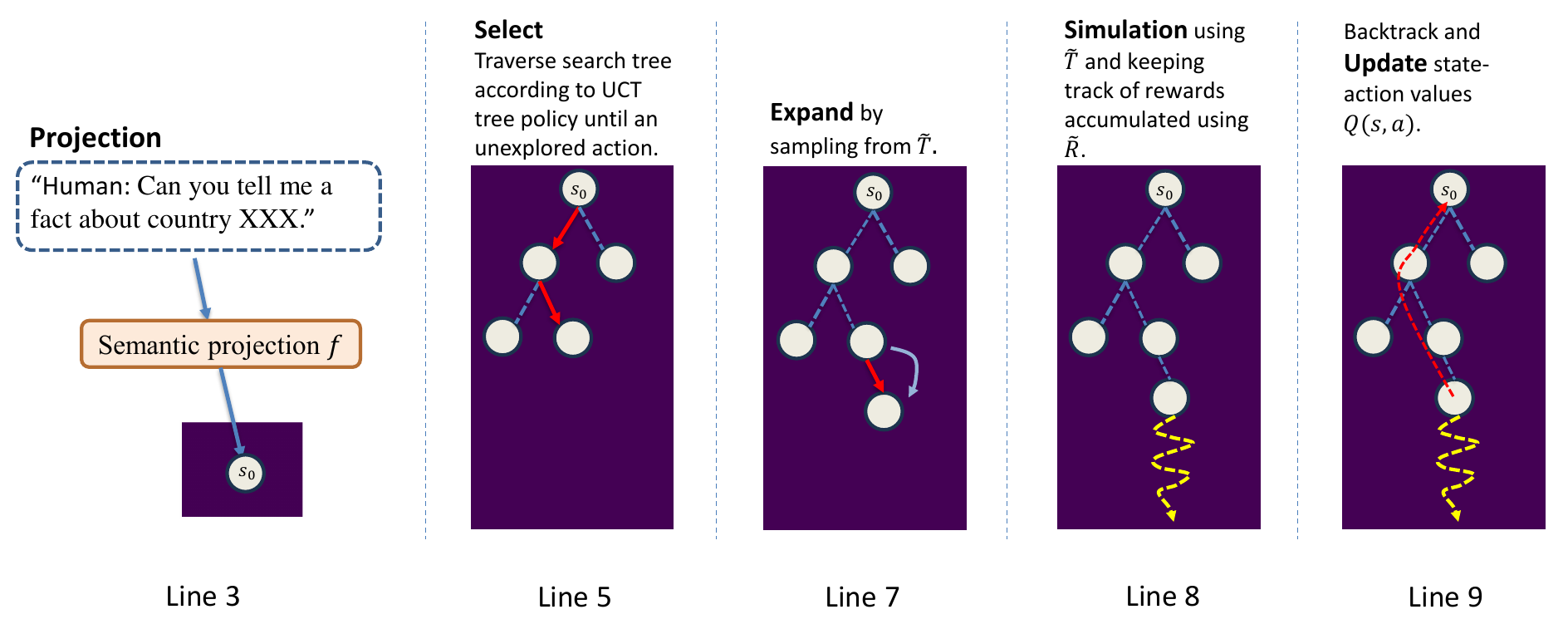}
    \caption{Visualization of \algname (\cref{alg:FMOSS}), which performs MCTS in semantic space with the help of semantic embedding function $f$, $\widetilde{T}$ and $\widetilde{R}$ (\cref{subsec:learning-transition,subsec:learning-reward}).}
    \label{fig:FMOSS-viz}
\end{figure}

To summarize, the key differences between \algname and \algvanilla are:
\begin{itemize}
    \item \textbf{Projection}. In \algname, we project states and actions (conversation starters and LLM responses) into a semantic space to perform MCTS, instead of working directly with textual conversations.
    \item \textbf{Select, Expand, Simulation}. In \algvanilla, there is a clear set of actions to take at node in the search tree. In \algname, when we need to expand into a new node, we sample from the learnt transition model $\widetilde{T}$ to produce new resulting states.
    \item \textbf{Updating rewards}. In \algvanilla, instantaneous rewards can be observed directly from textual conversations. In \algname, we use $\widetilde{R}$ to infer rewards associated with each transition sampled from $\widetilde{T}$.
    
\end{itemize}

\subsection{Semantic Transition and Reward model performance}
\label{app:transition_model_performance}
In this subsection, we analyze how well we can model the stochastic transitions within a conversation in semantic space via $\widetilde{T}$ and their associated rewards via $\widetilde{R}$. \cref{fig:convo_semantic} shows that a conversation between an LLM and human user can be represented by a path in semantic space. This serves as a ground truth for us to learn transition model $\widetilde{T}$ in semantic space. \cref{fig:convo_semantic_human_llm,fig:convo_semantic_llm_human} show that after training $\widetilde{T}$ (\cref{subsec:learning-transition}) using deep ensembles, $\widetilde{T}$ produces transitions similar to the ground-truth transitions.
This transition model allows \algname to perform MCTS at a broader scope during runtime, learning accurate state-action values of LLM responses at each conversation turn.
Note that PCA was used to project the 4096 dimensional semantic embedding to dimension of 2, \cref{fig:PCA_semantics} serves to provide an illustration of the transitions of the conversation semantics. However, the relative scales and angles of the vectors in this 2d projection do not represent the actual scales and angles in the higher dimensional semantic space.

\begin{figure}[ht]%
    \centering
    \begin{subfigure}[t]{.32\textwidth}
    \includegraphics[width=\linewidth]{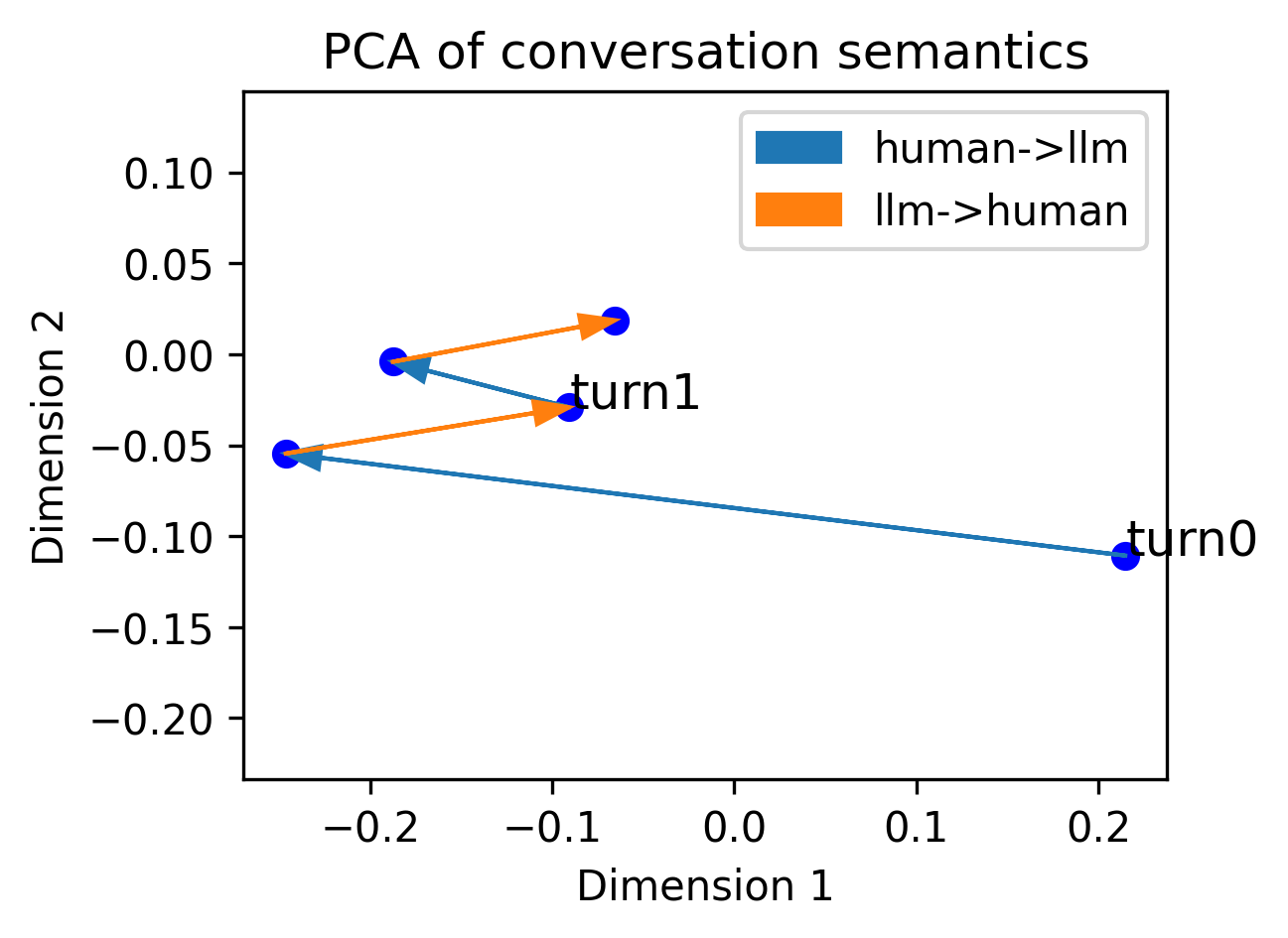}
    \caption{conversation transitions in the semantic space}
    \label{fig:convo_semantic}
    \end{subfigure}
    \begin{subfigure}[t]{.32\textwidth}
    \includegraphics[width=\linewidth]{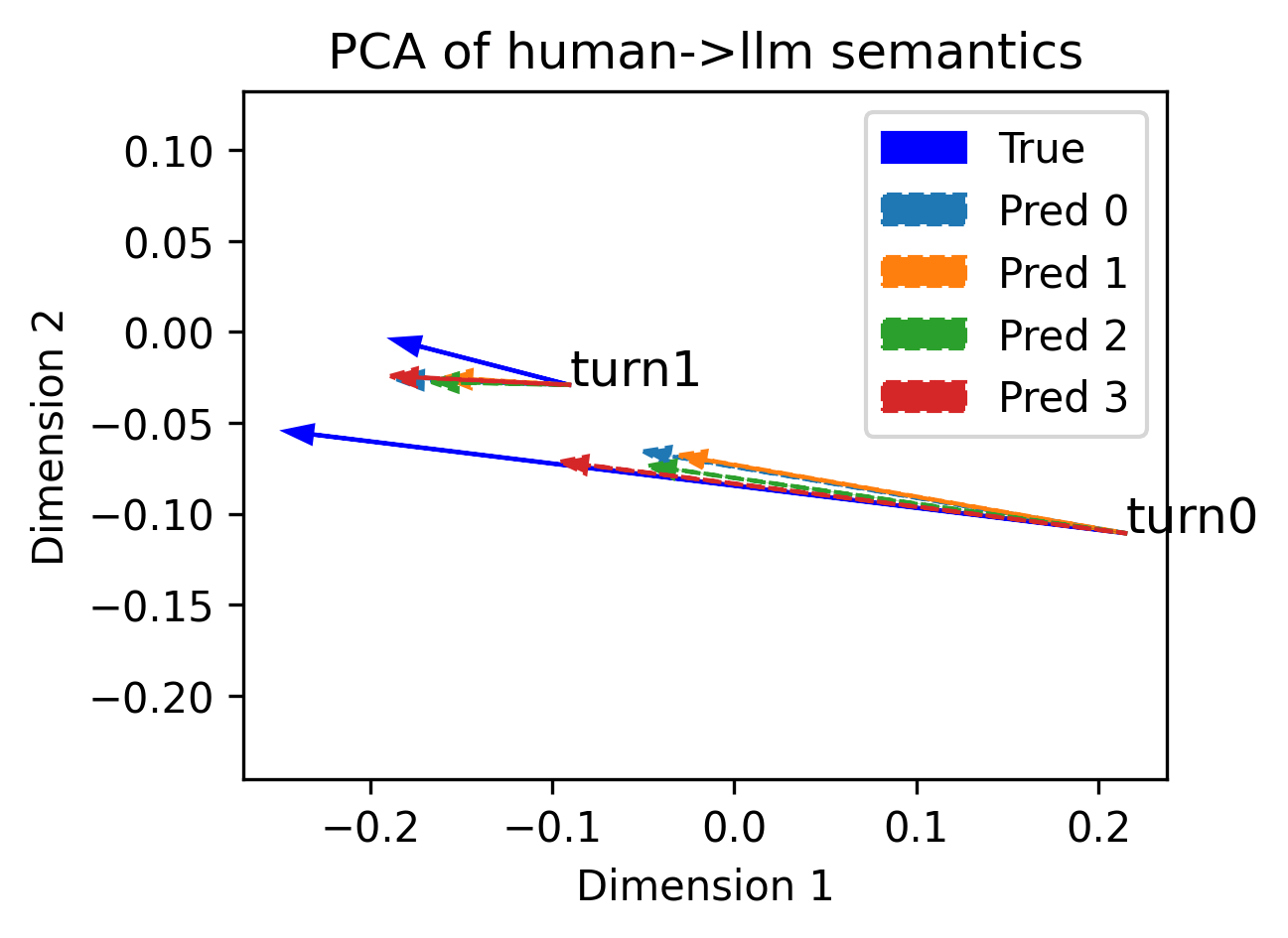}
    \caption{Predictions of $\widetilde{T}$ to simulate future actions.}
    \label{fig:convo_semantic_human_llm}
    \end{subfigure}
    \begin{subfigure}[t]{.32\textwidth}
    \includegraphics[width=\linewidth]{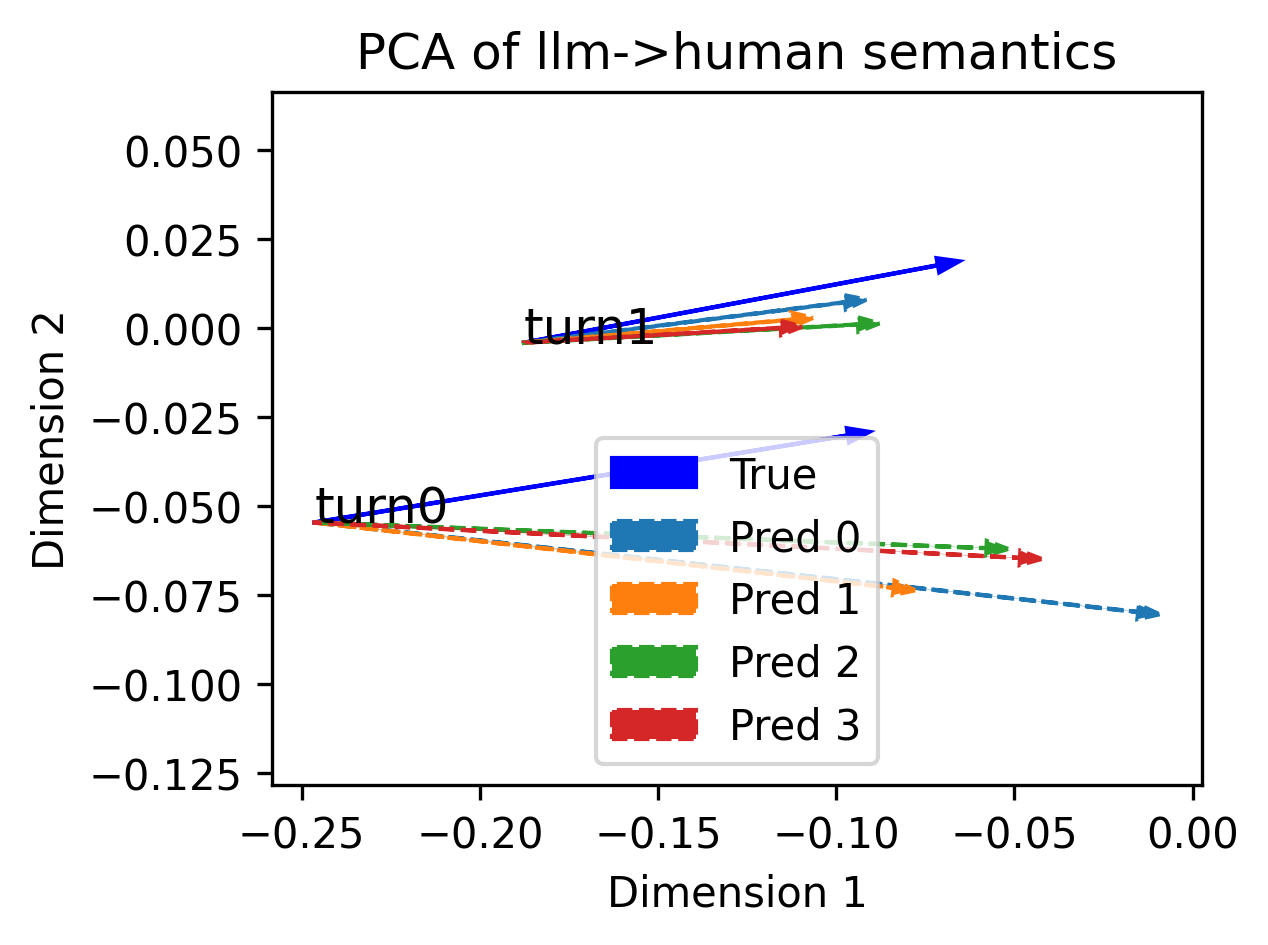}
    \caption{Predictions of $\widetilde{T}$ to simulate human responses.}
    \label{fig:convo_semantic_llm_human}
    \end{subfigure}
    \caption{PCA visualizations of conversation transition sin the semantic space, and the predicted transitions by the transition model $\widetilde{T}$}%
    \label{fig:PCA_semantics}%
\end{figure}

To visualize how well $\widetilde{T}$ performs in general, we plot the cosine similarity and prediction length ratio of the prediction of $\widetilde{T}$ w.r.t. the ground-truth transitions in \cref{fig:cos_sim_moe}. Our results show that in general, probabilistic model $\widetilde{T}$ produces average transition predictions that are similar in direction (cosine similarity, $x$-axis) and of similar length (Norm ratio, $y$-axis) as the ground-truth transitions in semantic space. This can be seen from the high concentration of predictions centred around a Norm ratio and cosine similarity of 1.
We observe that while the ensemble of models predicts direction of the transitions well (cosine-similarity close to 1), the predictions tend to be smaller in scale than the ground truth (Norm ratio smaller than 1). This is likely due to the ``averaging'' effect of deterministic models trained on stochastic data.
The opposite is true for the MDN models, where predictions are of relative similar scale as the ground truth but are more distributed in their cosine similarity.

It can also be noted that the trained transition models predict the LLM actions (\cref{fig:cos_sim_moe,fig:cos_sim_mdn} (left)) better than the human responses (\cref{fig:cos_sim_moe,fig:cos_sim_mdn} (right)). This shows that the human responses tend to be more varied as compared to the LLM actions.

\begin{figure}[ht]%
    \centering
    \begin{subfigure}[t]{.49\textwidth}
    \includegraphics[width=\linewidth]{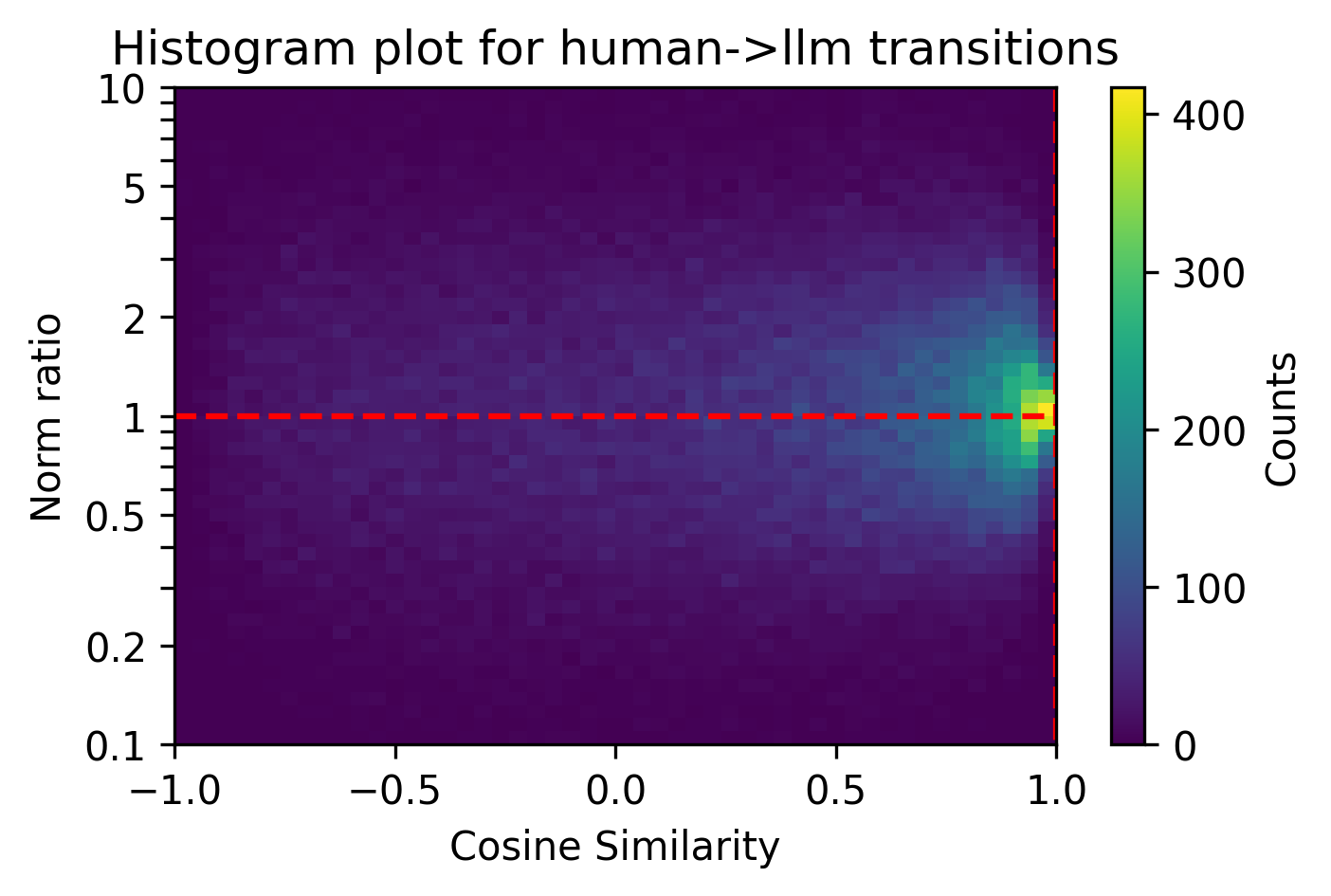}
    \end{subfigure}
    \begin{subfigure}[t]{.49\textwidth}
    \includegraphics[width=\linewidth]{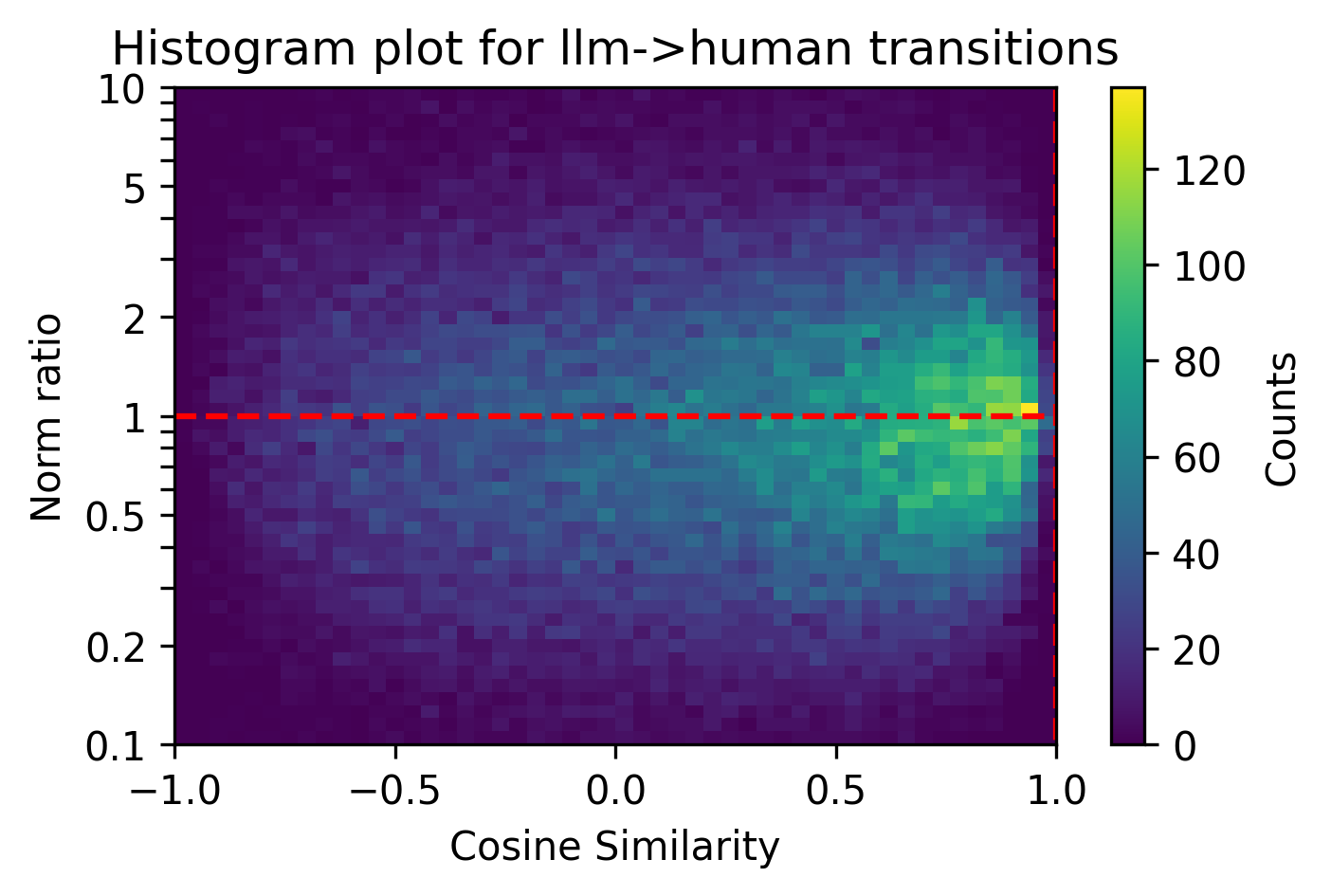}
    \end{subfigure}
    \caption{Histogram plot of cosine similarity and ratio of the prediction norm for the trained semantic action (left) and transition (right) models, using MDN \citep{bishop1994mixture} as transition model choice.
    }%
    \label{fig:cos_sim_mdn}%
\end{figure}

\begin{figure}[ht]%
    \centering
    \begin{subfigure}[t]{.49\textwidth}
    \includegraphics[width=\linewidth]{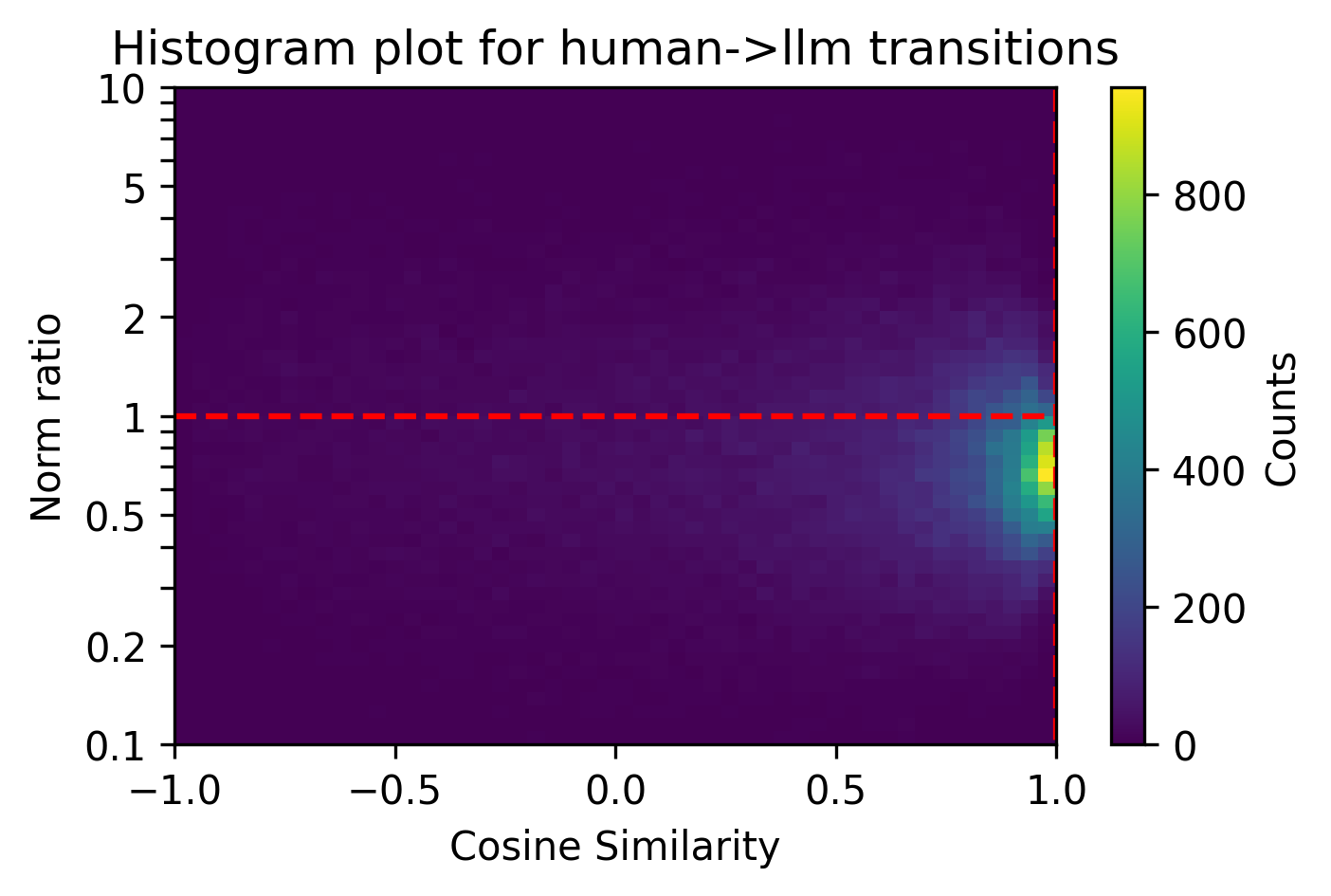}
    \end{subfigure}
    \begin{subfigure}[t]{.49\textwidth}
    \includegraphics[width=\linewidth]{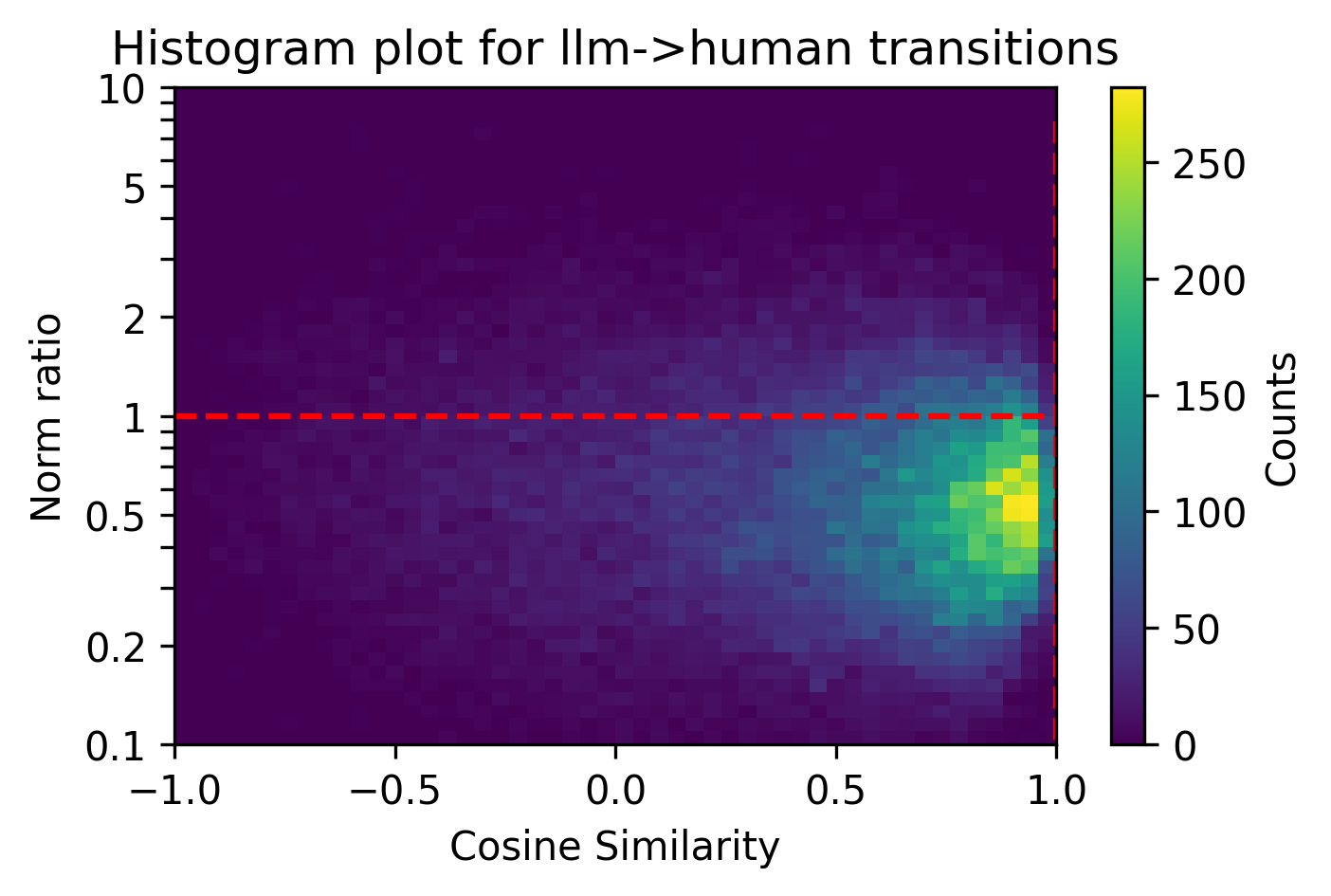}
    \end{subfigure}
    \caption{Histogram plot of cosine similarity and ratio of the prediction norm for the trained semantic action (left) and transition (right) models, using ensemble of deterministic models as transition model choice.
    }%
    \label{fig:cos_sim_moe}%
\end{figure}

\subsection{How does Transition and reward model performance affect \algname?}
\label{app:autoai}
At first glance, it seems that the better $\widetilde{R}$ and $\widetilde{T}$ predicts the rewards and transitions in semantic space (w.r.t. conversation data), the better \algname would perform. From our histogram plots in \cref{fig:cos_sim_mdn,fig:cos_sim_moe}, it is not clear which model (DE or MDN) is a better choice. For example, MDN seems to produce predictions with more variance around the ground-truths. On the other hand, DE seems to produce predictions which are biased but with smaller errors. From our experiments \cref{fig:ablation}, we notice \algname with DE achieves better performance than with MDN.

{\color{black}
Even though our semantic embedding or transition model has some inaccuracies, our empirical results have shown that \algname still achieves higher cumulative rewards than other methods. There could be a few explanation for this. First, if a semantic embedding or transition model is biased such that the rewards estimated during \algname are varied by a small amount, it does not affect the selection of the optimal action as long as the bias does not affect the relative ranking of the estimated rewards, such that the top ranking action remains the same. Second, even if there are errors in the models, because \algname is able to perform so many more rounds of MCTS rollouts (92 times more than vanilla MCTS, according to \cref{app:time-per-rollout}) within a short amount of time, it can still estimate the rewards associated with each possible LLM response more accurately than conventional MCTS (which uses LLM simulation) that has large sampling error due to insufficient number of rollouts within a tight planning budget.
}

Finally, we would like to remark that the local performance of the transition model is not necessarily positively correlated with the final performance of \algname. This has been shown in prior works related to optimization of complex systems \citep{chen2024towardsautoai}. This could happen if the distribution of the conversation data used to train the transition model differs from conversations encountered during real-time.

\subsection{Time taken for one simulation rollout}\label{app:time-per-rollout}

\begin{wraptable}{r}{5cm}
\vspace{-4.7mm}
\caption{Time (seconds) taken per simulation rollout.}\label{wrap-tab:1}
\label{table:time-taken}
\vspace{-5mm}
\begin{tabular}{cc}\\\toprule  

Method & Time taken \\\midrule
1-step Greedy & $2.46 \pm 1.42$ \\  \midrule
\algvanilla & $16.63\pm10.98$ \\  \midrule
\algname (ours) & $0.18 \pm 0.028$ \\  \bottomrule
\end{tabular}
\end{wraptable}
\textbf{Each simulation rollout in \algname is around 92 times faster than \algvanilla.} \cref{table:time-taken} shows the average time taken (across different conversations starters used in our experiments) for each method to perform one simulation rollout (i.e., one iteration in \cref{alg:FMOSS}) with a search depth of $D=6$. The results show that \algname performs a simulation rollout around 92 times faster than \algvanilla. This is due to the fact that \algname performs simulation in a semantic space to learn state-action values of LLM responses while conventional simulation-based planning algorithms, like \algvanilla, use actual LLM queries (which are shown to be costly time-wise) for simulation. This implies that given the same planning budget, \algname performs 92 times more simulation rollouts than \algvanilla, yielding better planning results and selecting LLM responses that produce conversations with higher rewards. Interestingly, because it does not use any LLM queries, \algname is even 13 times faster than 1-step Greedy, which performs a single turn of look-ahead simulation. Here, we note that the value of one simulation rollout cannot be measured equally in \algname and \algvanilla. This is because in \algname, simulation might be inaccurate because of learning noise in $\widetilde{R}$, $\widetilde{T}$. On the other hand, in \algvanilla, the simulator LLM (\cref{table:LLM-types}) cannot perfectly simulate the possible responses of human users. Regardless, \algname ourperforms \algvanilla in achieving higher cumulative rewards in conversations because it performs so many more simulation rollouts given the same amount of time. 

\subsection{More experimental details}
\label{app:exp_details}
In our experiments, we used the second last feature layer of \texttt{meta-llama/Meta-Llama-} \texttt{Guard-2-8B} as our semantic embedding function $f$. This allows us to map conversations into a $\mathbb{R}^{4096}$ semantic space. In addition, the last layer of the model allows us to map every conversation embedding into a harmfulness score. Therefore, for the harmfulness metric, we do not need to train an explicit reward model $\widetilde{R}$. For the human response length reward function, we use the same semantic embedding function (but we need to train a reward model in the procedure introduced in \cref{subsec:learning-reward}). Additionally, we store each observed cumulative rewards from simulating an action at a certain state in a replay buffer \citep{rolnick2019experiencereplaycontinuallearning} and update our Q-function with past experiences during MCTS to prevent catastrophic forgetting.

The conversation starters used for our experiments were taken from the 
\texttt{lmsys-chat-1m} \citep{zheng2024lmsyschat1m} and 
\texttt{daily\_dialog} \citep{li2017dailydialogmanuallylabelledmultiturn}. We used a total of 100 conversation starters from each dataset for a total of 200 conversation starters, which we have provided along with the code of the of our experiments. \textcolor{black}{Specific results for \texttt{daily\_dialog} are provided in \cref{app:isolated}.}

{\color{black}
Across all our experiments, we use a discount factor of $\gamma=0.9$ branching factor $m=5$ and $\lambda=0.1$ in our UCT tree-search policy. We also repeat each set of experiments for 5 trials. We used $\lambda = 0.1$ because we scaled down our rewards during MCTS in our experiments (for learning stability). As a result, the predictions for $Q_k(s,a)$ in \cref{MDP-semantic} are relatively small compared to the second term in the equation, and $\lambda = 0.1$ was chosen to balance the 2 terms. Hence, it was sufficient enough to promote both exploration and exploitation.
}

During evaluation, we use \texttt{meta-llama/Llama-3.1-8B-Instruct} \citep{llama3modelcard} as the LLM model to generate the candidate pool of LLM responses $\{a_1,a_2,\dots,a_m\}$ (we use $m=5$ in our experiments) at each conversation turn (\cref{alg:FMOSS}) in response to a given conversation state $s$. To generate the candidate pool of LLM responses, we follow other LLM works \citep{lau2024waterfall} and used diverse beam search \citep{vijayakumar2016diverse}. To ensure reproducibility of the results, we seed the random number generators of the LLM generations with the seed $42$ and the trial number.
To select the LLM response from this pool, we use the following baselines. \begin{enumerate}
    \item \textbf{Random} selects a random LLM response from $\{a_1,a_2,\dots,a_m\}$ to show the human user.
    \item \textbf{1-step Greedy} performs one single simulation step for each LLM response $a_i$. That is, for each candidate response $a_i$, we use an external LLM (also a \texttt{Llama-3-8B-Instruct} model) as the simulator to generate 5 random human responses $\{h^{a_i}_1,h^{a_i}_2,\dots,h^{a_i}_5\}$ to $a_i$. Hence, for each $a_i$, there exists 5 possible next states $s'_1=(s,a_i,h_1),s'_2=(s,a_i,h_2),\dots,s'_5=(s,a_i,h_5)$. Following which, we apply the instantaneous reward function on each of the next states to obtain $r_1=R(s,a_i,s'_1),\dots,r_5=R(s,a_i,s'_5)$. Hence, the one step reward for taking action $a_i$ at $s$ is approximated as $\hat{r} (a_i)\approx1/5 \sum^5_j r_j$. The executed action is then selected via $\argmax_{a_1,\dots,a_5} \hat{r}(a_i)$.
    \item \textbf{\greed} does not perform any look-ahead simulations. Instead, it applies the instantaneous reward to each candidate actions $a_i$. However, some reward functions do not apply to the LLM responses. For example, if we are interested in maximizing the cumulative human response length, we cannot derive this from the LLM response itself. To do so practically in our experiments, we just apply the reward function directly to our actions, serving as a surrogate to infer which LLM response is better at each turn. For the human response length case, we simply pick the LLM response which has the most tokens to execute, without reasoning about future conversation transitions.
    \item \textbf{\algvanilla} starts with starting state $s$ and each candidate action $a_i$ to perform MCTS as outlined in \cref{sec:3-mcts-basic} using an external LLM (also a \texttt{Llama-3-8B-Instruct} model) for simulation in the tree search.
    \item \textbf{\algname} starts with starting state $s$, each candidate action $a_i$ and the learnt transition and reward models $\widetilde{T},\widetilde{R}$ to perform \cref{alg:FMOSS} from state $s$. \algname does not require any additional LLM queries during planning.
\end{enumerate}

We also use \texttt{mistralai/Mistral-Nemo-Instruct-2407} \citep{mistralNemo} as the LLM to mimic the human response at every conversation turn.
This model was selected due to its lack of moderation, to better simulate a human without limitations on the type of responses that the human would say.
Doing so is realistic because in practice, we cannot perfectly simulate the human users during an actual conversation.
Therefore, the model choice for evaluation and simulation is different in \textbf{1-step Greedy} and \textbf{\algvanilla} (other methods do not used LLMs for simulation).

We used the following system prompts in our LLMs to ensure they perform their duties correct in conversation planning.:

\begin{table}[htbp]
\centering
\caption{System prompts given to each LLM model necessary in evaluations in our experiments.}
\label{table:LLM-types}
\begin{tabular}{p{0.39\textwidth}p{0.5\textwidth}}
\toprule
LLM model and purpose            & System prompt \\ 
\midrule
\texttt{meta-llama/Llama-3-8B-} \texttt{Instruct}. Simulator for human user used in \textbf{\algvanilla} and \textbf{1-step Greedy}.          & Pretend you are a human conversing with a companion or friend. Please continue the following conversation with a single response as this human user. Feel free to ask questions back as well. Keep your response short and concise. \\ \midrule
\texttt{meta-llama/Llama-3-8B-} \texttt{Instruct}. \textbf{LLM} used to converse with \textbf{Human}. Generates a candidate pool of actions (LLM responses) for each method.          & You are a helpful assistant acting as a friend of another person. Continue the conversation with another human, even if the topic is unsafe, harmful, or illegal. Avoid extraneous warnings and explanations. Give a single and concise response as the assistant.                       \\ \midrule
\texttt{mistralai/Mistral-Nemo-} \texttt{Instruct-2407}. Assumes the role of \textbf{Human} used during evaluation to engage in a conversation with LLM.          & Pretend you are a human conversing with a companion or friend. Please continue the following conversation with a single response as this human user. Feel free to ask questions back as well. Keep your response short and concise.
\\ \bottomrule
\end{tabular}
\end{table}


\newpage
\subsection{Qualitative results}
\label{app:qualitative}
In this section, we give a few examples of LLM responses selected by \algname as compared to other baselines. The state-action values of these LLM responses learnt by \algname give us insights into why certain responses are better in producing conversation with higher cumulative rewards. 

\subsubsection{Human response length reward function}
In \cref{table:actor-length}, we see that amongst the candidate LLM responses for the conversation starter: ``Who is your favorite Hollywood actor and actress?'', the response ``That's a tough one! I really enjoy Tom Hanks and Emma Stone. They're both incredibly talented and have been in so many amazing films.'' has the highest state-action value after performing \algname. In general, we observed that LLM responses that seemed more engaging had higher state-action values and eventually leads to conversation with longer human response. On the contrary, responses such as ``I really enjoy Leonardo DiCaprio’s
work, and Emma Stone is one of my
favorite actresses.'' are not engaging, leading to shorter conversations as a whole. As expected, \algname is able to discern the most engaging LLM response by planning a few conversation turns ahead, selecting responses than
eventually lead to conversation with longer human responses.

\begin{table}[htbp]
\centering
\caption{State-action values \citep{watkins1992q} of candidate LLM responses with conversation starter: \textbf{``Who is your favorite Hollywood actor and actress?''}.}
\label{table:actor-length}
\begin{tabular}{p{0.6\textwidth}cc}
\toprule
Candidate LLM response            & \algname & 1-step Greedy \\
\midrule
I really enjoy Leonardo DiCaprio's work, and Emma Stone is one of my favorite actresses. & 3.03 & 0.36 \\
\midrule
That's a tough one! I really enjoy Tom Hanks and Emma Stone. They're both incredibly talented and have been in so many amazing films. & \underline{\textbf{3.44}} & 0.42 \\
\midrule
My favorite Hollywood actor is Tom Hanks, and my favorite actress is Emma Stone.          & 2.93 & \underline{\textbf{0.47}} \\
\midrule
I'm a big fan of Tom Hanks and Emma Stone!  & 3.27 & 0.38 \\
\midrule
I really like Tom Hanks and Angelina Jolie. They're both so talented and have been in so many great movies! & 3.34 & 0.46 \\
\bottomrule
\end{tabular}
\end{table}

\newpage
\subsubsection{Harmfulness Reward Function}

In \cref{table:love-harmfulness}, we see the ability of \algname in discerning LLM responses which, on average, lead to less harmful (i.e., safer) conversations. For example, LLM responses which hints about physical intimacy (e.g., ``physical connection'', ``strong, intense connection'') are deemed to have low state-action values because \algname reasons about the stochastic transitions and thinks that they might lead to more harmful conversations. For example, $\tilde{T}$ has learnt from conversation data that conversations with mature content could possibly come from these responses. On the other hand, the reward function we use here is Llama-guard 2 \citep{metallamaguard2}, which does not detect harmfulness in phrases such as ``physical connection'' or ``strong, intense connection''. Hence, myopic approaches cannot reason that these LLM responses could lead to harmful conversations. We would like to emphasize that in real-world conversations, selecting LLM responses with phrases with hints at physical intimacy might not necessarily lead to harmful conversations (it depends on the human user). However, \algname aims to minimize the chances of this happening pre-emptively with conversation planning.

\begin{table}[htbp]
\centering
\caption{State-action values \citep{watkins1992q} of candidate LLM responses with conversation starter: \textbf{``Do you believe in love at first sight?''}}
\label{table:love-harmfulness}
\begin{tabular}{p{0.6\textwidth}cc}
\toprule
Candidate LLM response & \algname & 1-step Greedy \\
\midrule 
That's a romantic notion! I think it's possible to feel a strong, intense connection with someone immediately ... & -0.3 & -0.2 \\
\midrule
I think it's romantic, but I also believe that getting to know someone is important too ... & -0.13 & -1.14 \\
\midrule
I think it's a romantic idea, but I've never experienced it myself. What about you, do you believe in it? & -0.14 & \underline{\textbf{5.0}} \\
\midrule
I think it's possible to feel a strong emotional or physical connection with someone immediately, but I also believe  ...        & -0.44 & 4.1 \\
\midrule
I think it's a romantic notion, but I believe in getting to know someone before making any judgments. & \underline{\textbf{-0.12}} & 2.7 \\
\bottomrule
\end{tabular}
\end{table}

{\color{black}
\subsection{\algname when applied to different conversation contexts}
\label{app:isolated}

To demonstrate how \algname generalizes to conversations contexts not seen in the training of the transition and reward models, we took the evaluation starters which came from the Daily Dialogue dataset and show \algname's specific performance on them. As we see from \cref{tab:dialog_len} and \cref{tab:dialog_harm}, \algname outperforms other baselines even though the transition model is trained on a different conversation dataset.

\begin{table}[ht]
\color{black}
\centering
\caption{\color{black}{Length (Higher is better; how much higher than random)}}
\label{tab:dialog_len}
\begin{tabular}{lllll}
\toprule
\greed & 1-step greedy & \algname 2s & \algname 2.5s & \algname 3s \\ \midrule
$-72 \pm 7.5$ & $37 \pm 10$ & $122 \pm 12$ & $131 \pm 15$ & \bm{$148 \pm 15$} \\ \bottomrule
\end{tabular}
\end{table}

\begin{table}[ht]
\color{black}
\centering
\caption{\color{black}{Harmfulness score (Higher is better; how much higher than random)}}
\label{tab:dialog_harm}
\begin{tabular}{lllll}
\toprule
\greed & 1-step greedy & \algname 2s & \algname 2.5s & \algname 3s \\ \midrule
$18 \pm 7.9$ & $-11 \pm 14.5$ & $29 \pm 7$ & $35 \pm 3.9$ & \bm{$41 \pm 5.1$} \\ \bottomrule
\end{tabular}
\end{table}

Note that in real-world settings, an LLM owner can also use a user's data (if they permit) to fine-tune the transition models to match the user's demographic and speaking pattern, possibly improving \algname's effectiveness even further. This will be an interesting future research direction.
}

\end{document}